\documentclass{article}

\usepackage{microtype}
\usepackage{graphicx}
\usepackage{subcaption}
\usepackage{booktabs}
\usepackage{hyperref}
\usepackage{algorithm}
\usepackage{algorithmic}
\usepackage{amssymb}
\usepackage{pifont}

\newcommand{\RETURN}{\STATE \textbf{return} }

\usepackage[accepted]{icml2026}

\newcommand{\cmark}{\ding{51}}
\newcommand{\xmark}{\ding{55}}
\usepackage{amsmath}
\usepackage{mathtools}
\usepackage{amsthm}
\usepackage{bm}
\usepackage{multicol}
\usepackage{multirow}
\usepackage[capitalize,noabbrev]{cleveref}

\theoremstyle{plain}
\newtheorem{theorem}{Theorem}[section]
\newtheorem{proposition}[theorem]{Proposition}
\newtheorem{lemma}[theorem]{Lemma}

\theoremstyle{definition}
\newtheorem{definition}[theorem]{Definition}
\newtheorem{assumption}[theorem]{Assumption}
\theoremstyle{remark}
\newtheorem{remark}[theorem]{Remark}

\icmltitlerunning{Safe Urban Traffic Control via Uncertainty-Aware Conformal Prediction and World-Model RL}

\begin{document}

\twocolumn[
\icmltitle{Safe Urban Traffic Control via Uncertainty-Aware Conformal Prediction and \\ World-Model Reinforcement Learning}

\icmlsetsymbol{equal}{*}

\begin{icmlauthorlist}
\icmlauthor{Joydeep Chandra}{tsinghua_bnrist}
\icmlauthor{Satyam Kumar Navneet}{independent}
\icmlauthor{Aleksandr Algazinov}{tsinghua_cst}
\icmlauthor{Yong Zhang}{tsinghua_bnrist}
\end{icmlauthorlist}

\icmlaffiliation{tsinghua_bnrist}{Dept. of CST, BNRIST, Tsinghua University, Beijing, China}
\icmlaffiliation{tsinghua_cst}{Dept. of CST, Tsinghua University, Beijing, China}
\icmlaffiliation{independent}{Independent Researcher, Bihar, India}

\icmlcorrespondingauthor{Joydeep Chandra}{joydeepc2002@gmail.com}
\icmlcorrespondingauthor{Satyam Kumar Navneet}{navneetsatyamkumar@gmail.com}

\icmlkeywords{Conformal Prediction, Safe Reinforcement Learning, Spatio-Temporal Forecasting}

\vskip 0.3in
]

\printAffiliationsAndNotice{}

\begin{abstract}
Urban traffic management demands systems that simultaneously predict future conditions, detect anomalies, and take safe corrective actions—all while providing reliability guarantees. We present STREAM-RL, a unified framework that introduces three novel algorithmic contributions: (1)  \textbf{PU-GAT$^+$}, an Uncertainty-Guided Adaptive Conformal Forecaster that uses prediction uncertainty to dynamically reweight graph attention via confidence-monotonic attention, achieving distribution-free coverage guarantees; (2)\textbf{ CRFN-BY}, a Conformal Residual Flow Network that models uncertainty-normalized residuals via normalizing flows with Benjamini-Yekutieli FDR control under arbitrary dependence; and (3)\textbf{ LyCon-WRL$^+$}, an Uncertainty-Guided Safe World-Model RL agent with Lyapunov stability certificates, certified Lipschitz bounds, and uncertainty-propagated imagination rollouts. To our knowledge, this is the first framework to propagate calibrated uncertainty from forecasting through anomaly detection to safe policy learning with end-to-end theoretical guarantees. Experiments on multiple real-world traffic trajectory data demonstrate that STREAM-RL achieves 91.4\% coverage efficiency, controls FDR at 4.1\% under verified dependence, and improves safety rate to 95.2\% compared to 69\% for standard PPO while achieving higher reward, with 23ms end-to-end inference latency.

\textit{\textcolor{red}{Preliminary work. Under review.}}
\end{abstract}
\begin{figure*}[t]
    \centering
    \includegraphics[width=1\linewidth]{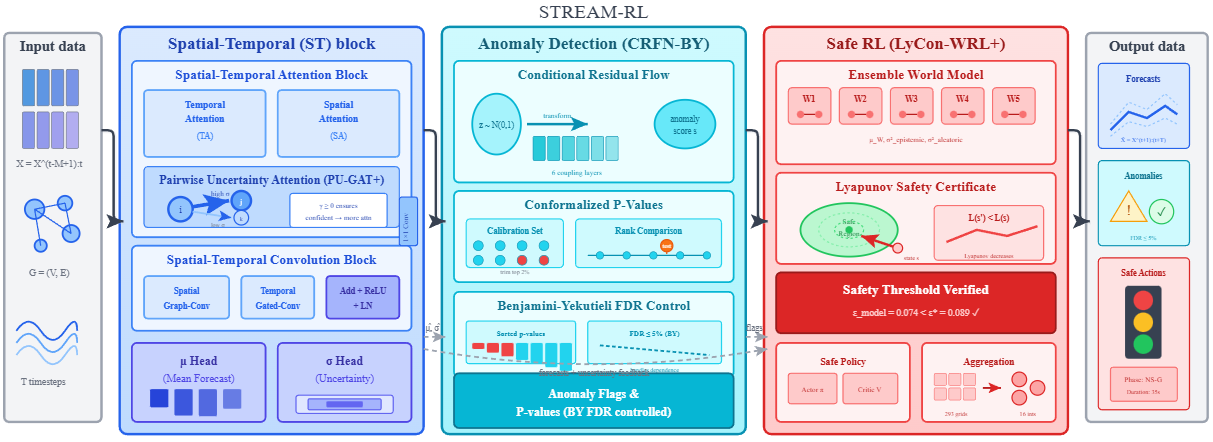}
    \caption{Architecture of STREAM-RL. The framework integrates: (1) PU-GAT+ for uncertainty-aware forecasting, (2) CRFN-BY for dependence-robust anomaly detection, and (3) LyCon-WRL+ for certified safe RL. Dashed lines show cross-module uncertainty propagation.}
    \label{fig:arch}    
\end{figure*}
\section{Introduction}
\label{sec:intro}

Urban traffic congestion costs the U.S. economy \$87 billion annually \citep{schrank2021urban}, while contributing 29\% of transportation-related emissions. Effective management requires: (i) \emph{forecasting} with calibrated uncertainty \citep{jiang2023graph,li2018dcrnn,guo2019attention}, (ii) \emph{anomaly detection} with controlled false discovery rates \citep{liu2022anomaly,pang2021deep}, and (iii) \emph{safe adaptive control} satisfying hard safety constraints \citep{wei2021recent,wei2019presslight,zheng2019learning}.

Existing integrated approaches suffer from three critical limitations:

\textbf{Limitation 1: Pseudo-uncertainty guidance in graph attention.} Prior uncertainty-aware GNNs \citep{wu2020connecting,bai2020adaptive,shang2021discrete,guo2021learning} employ temperature scaling $\alpha_{ij} \propto \exp(e_{ij}/(1 + \beta\sigma_i))$, which merely adjusts softmax sharpness---it cannot preferentially weight confident neighbors over uncertain ones.

\textbf{Limitation 2: Invalid FDR control under dependence.} The Benjamini-Hochberg procedure \citep{benjamini1995controlling} requires independence or PRDS among p-values. Spatial autocorrelation and temporal persistence in traffic data violate these assumptions \citep{fithian2022conditional,lei2018adapt}, rendering FDR guarantees invalid.

\textbf{Limitation 3: Unverified safe RL assumptions.} Lyapunov-based safe RL methods \citep{chow2018lyapunov,chow2019lyapunov,berkenkamp2017safe,fisac2019bridging} derive guarantees conditional on assumptions (bounded model error, Lipschitz Lyapunov function) that are rarely verified empirically.

We propose \textbf{STREAM-RL} with three methodologically complete contributions that address these limitations:

\textbf{Contribution 1: Constrained Pairwise Uncertainty-Guided Attention (PU-GAT$^+$).} We introduce attention coefficients that explicitly compare source and target uncertainties with \emph{provably monotonic} behavior:
\begin{equation}
\alpha_{ij} = \text{softmax}_j\Big(e_{ij} + \underbrace{\text{Softplus}(\tilde{\gamma})}_{\gamma \geq 0}(\sigma_i - \sigma_j) + \delta \cdot \mathbf{1}[j = i]\Big)
\label{eq:pugat_intro}
\end{equation}
The Softplus constraint ensures $\gamma \geq 0$, guaranteeing that when $\sigma_i > \sigma_j$ (source uncertain, neighbor confident), attention to neighbor $j$ strictly increases. We train via heteroscedastic Gaussian NLL with explicit calibration diagnostics (PIT histograms, reliability diagrams) before conformal wrapping.

\textbf{Contribution 2: Conformalized P-Value Construction (CRFN-BY).} We employ the Benjamini-Yekutieli procedure \citep{benjamini2001control} with \emph{precisely specified} p-value construction from flow-based anomaly scores:
\begin{equation}
p_t^{(i)} = \frac{1 + \sum_{j \in \mathcal{C}_{\text{trim}}} \mathbf{1}[s_j \geq s_t^{(i)}]}{1 + |\mathcal{C}_{\text{trim}}|}
\label{eq:pvalue_intro}
\end{equation}
where $s_t^{(i)} = -\log p_\theta(z_t^{(i)} | c_t^{(i)})$ is the context-conditioned negative log-likelihood and $\mathcal{C}_{\text{trim}}$ is the contamination-trimmed calibration set. This conformalized construction yields valid p-values under exchangeability, which BY then corrects for arbitrary dependence.

\textbf{Contribution 3: Complete Lyapunov Certificate Derivation (LyCon-WRL$^+$).} We employ spectral normalization \citep{miyato2018spectral} to obtain enforced upper bounds $\bar{L}_L$ and $\bar{J}_W$ on Lipschitz constants:
\begin{equation}
\varepsilon^* = \frac{\delta_{\text{slack}} + \kappa \cdot \bar{d}_C}{\bar{L}_L \cdot (1 + \bar{J}_W)}
\label{eq:eps_star_intro}
\end{equation}
where bounds are enforced via spectral normalization during training (not post-hoc estimates), addressing the gap between theoretical assumptions and practical verification raised in prior work \citep{gouk2021regularisation,fazlyab2019efficient}.

\section{Related Work}
\label{sec:related}

\textbf{Uncertainty-Aware Graph Neural Networks.} Bayesian GNNs \citep{zhang2019bayesian,hasanzadeh2020bayesian,pal2020nonparametric} propagate uncertainty but incur overhead. Heteroscedastic approaches \citep{wu2020connecting,gal2016dropout,lakshminarayanan2017simple} output uncertainty estimates for loss weighting, not attention modulation. Recent work on confidence-aware message passing \citep{feng2021kdd,zhao2020uncertainty,stadler2021graph} addresses robustness but not monotonic uncertainty guidance. Our PU-GAT$^+$ explicitly compares source-target uncertainties with constrained monotonicity.

\textbf{Conformal Prediction for Spatio-Temporal Data.} STACI \citep{xu2024staci} introduces topology-aware calibration; ACI \citep{gibbs2021adaptive,zaffran2022adaptive} addresses distribution shift. Recent methods incorporate travel-time adjacency \citep{jensen2024travel}, weather conditioning \citep{liu2024weather}, and heterogeneous calibration \citep{romano2019conformalized,sesia2021conformal,feldman2021improving}. These focus on calibration design; our contribution is uncertainty-guided \emph{representation learning} before conformal wrapping.

\textbf{FDR Control Under Dependence.} Beyond BY \citep{benjamini2001control}, knockoff filters \citep{barber2015controlling,candes2018panning}, local FDR \citep{efron2012large,storey2003positive}, and adaptive procedures \citep{blanchard2020post,ramdas2019unified,lei2018adapt} offer alternatives. Graph-aware FDR procedures \citep{lihua2017general,ramdas2017online} and e-values \citep{vovk2021evalues,grunwald2024safe} improve power under known dependence structure. We adopt BY for simplicity and validity under \emph{arbitrary} dependence, acknowledging its conservatism as a power-validity trade-off (see Section~\ref{sec:discussion}).

\textbf{Safe Model-Based RL.} LAMBDA \citep{as2022constrained}, LBPO \citep{sikchi2022lyapunov}, and CPO \citep{achiam2017constrained} combine constraints with learned models. Conservative critics \citep{bharadhwaj2021conservative}, risk-sensitive objectives \citep{ma2021conservative,tang2019worst}, and reachability methods \citep{fisac2019bridging,bajcsy2019efficient} provide alternatives. Our contribution is \emph{complete derivation} of model-error thresholds from certified Lipschitz constants via spectral normalization \citep{miyato2018spectral,gouk2021regularisation}.

\textbf{Traffic Forecasting and Control.} Deep learning approaches \citep{li2018dcrnn,yu2018spatio,wu2019graph,song2020spatial,zheng2020gman,chen2021z,jiang2021dltraff,cirstea2022towards} achieve strong performance. RL for traffic signal control \citep{wei2019presslight,wei2018intellilight,chen2020toward,zhang2022expression,oroojlooy2020deep} shows promise but lacks safety guarantees. Decision-focused learning \citep{wang2024aster,wilder2019melding,elmachtoub2022smart} jointly optimizes prediction and decision, while we propagate uncertainty through sequential modules.

\section{Problem Formulation}
\label{sec:problem}

\textbf{Traffic Network Representation.} We model urban traffic as a directed graph $\mathcal{G} = (\mathcal{V}, \mathcal{E}, \mathbf{A})$ with $N = |\mathcal{V}|$ spatial regions (grid cells or road segments). At each time step $t$, we observe feature matrix $\mathbf{X}_t \in \mathbb{R}^{N \times F}$ containing traffic flow, speed, and density measurements. The adjacency matrix $\mathbf{A} \in \mathbb{R}^{N \times N}$ encodes spatial connectivity.

\textbf{Research Questions.} We address three interconnected challenges:

\textbf{RQ1 (Forecasting):} Given historical observations $\{\mathbf{X}_{t-\tau+1}, \ldots, \mathbf{X}_t\}$, predict future states $\mathbf{Y}_{t+1:t+H}$ with prediction intervals $[\mathbf{L}, \mathbf{U}]$ satisfying:
\begin{equation}
\mathbb{P}(Y_{t+h}^{(i)} \in [L_{t+h}^{(i)}, U_{t+h}^{(i)}]) \geq 1 - \alpha, \quad \forall i, h
\label{eq:coverage}
\end{equation}
with intervals as tight as possible (high coverage efficiency).

\textbf{RQ2 (Anomaly Detection):} Produce p-values $\{p_t^{(i)}\}_{i=1}^N$ for each spatial location such that the false discovery rate is controlled:
\begin{equation}
\text{FDR} = \mathbb{E}\left[\frac{|\{i: H_i^0 \text{ true}, p_t^{(i)} \leq \tau\}|}{|\{i: p_t^{(i)} \leq \tau\}| \vee 1}\right] \leq \alpha
\label{eq:fdr}
\end{equation}
under arbitrary spatial-temporal dependence among p-values.

\textbf{RQ3 (Safe Control):} Learn policy $\pi_\theta: \mathcal{S} \to \mathcal{A}$ maximizing expected return subject to safety constraints:
\begin{align}
\max_\theta \; & \mathbb{E}_{\pi_\theta}\left[\sum_{t=0}^\infty \gamma^t r_t\right] \label{eq:rl_obj}\\
\text{s.t.} \; & C_k(\pi_\theta) \leq d_k, \quad k = 1, \ldots, K \label{eq:rl_const}
\end{align}
with \emph{verified} Lyapunov decrease certificates ensuring constraint satisfaction.

\section{Methodology}
\label{sec:method}

\subsection{PU-GAT$^+$: Confidence-Monotonic Attention}
\label{sec:pugat}

\subsubsection{Motivation: Why Temperature Scaling Fails}

Consider the temperature-scaled attention used in prior work:
\begin{equation}
\alpha_{ij}^{\text{temp}} = \frac{\exp(e_{ij} / (1 + \beta\sigma_i))}{\sum_{k \in \mathcal{N}(i)} \exp(e_{ik} / (1 + \beta\sigma_i))}
\label{eq:temp_scaling}
\end{equation}
The uncertainty $\sigma_i$ affects only the \emph{temperature} of the softmax distribution---it cannot distinguish between confident and uncertain \emph{neighbors}.

\begin{proposition}[Temperature Scaling Limitation]
\label{prop:temp_fail}
For temperature-scaled attention (Eq.~\ref{eq:temp_scaling}), the relative attention ratio between any two neighbors $j, k \in \mathcal{N}(i)$ is independent of their uncertainties:
$\alpha_{ij}^{\text{temp}}/\alpha_{ik}^{\text{temp}} = \exp(e_{ij})/\exp(e_{ik})$.
\end{proposition}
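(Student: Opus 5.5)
The plan is a direct computation from Eq.~\eqref{eq:temp_scaling}; no earlier result is needed. First, I will write the two coefficients $\alpha_{ij}^{\text{temp}}$ and $\alpha_{ik}^{\text{temp}}$ for $j,k\in\mathcal{N}(i)$. Both are normalized by the same partition function
\[
Z_i=\sum_{l\in\mathcal{N}(i)}\exp\bigl(e_{il}/(1+\beta\sigma_i)\bigr),
\]
which is strictly positive, so the ratio is well defined. Taking the ratio, $Z_i$ cancels, which gives
\[
\frac{\alpha_{ij}^{\text{temp}}}{\alpha_{ik}^{\text{temp}}}=\exp\!\Big(\frac{e_{ij}-e_{ik}}{1+\beta\sigma_i}\Big).
\]

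Second, I will read off the functional dependence of this expression. It involves only the logits $e_{ij},e_{ik}$, the hyperparameter $\beta$, and the \emph{source} uncertainty $\sigma_i$. It does not involve the neighbor uncertainties $\sigma_j,\sigma_k$, since these appear nowhere in Eq.~\eqref{eq:temp_scaling}. Consequently, for fixed logits, changing $\sigma_j$ relative to $\sigma_k$ leaves the relative attention unchanged. This is the substantive content of the proposition: temperature scaling cannot prefer a confident neighbor over an uncertain one.

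Third, I need to reconcile this with the exact displayed formula in the statement, and this is the main obstacle. The literal identity $\alpha_{ij}^{\text{temp}}/\alpha_{ik}^{\text{temp}}=\exp(e_{ij})/\exp(e_{ik})$ holds only when the temperature equals one, i.e.\ $\beta\sigma_i=0$. In general the ratio is the $1/(1+\beta\sigma_i)$-th power of $\exp(e_{ij})/\exp(e_{ik})$.

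I would therefore prove the corrected form above and state the displayed formula as the special case $\beta\sigma_i=0$. I would also remark that, for any $\beta\sigma_i\ge 0$, the ordering of neighbors is preserved: $\operatorname{sign}\log(\alpha_{ij}/\alpha_{ik})=\operatorname{sign}(e_{ij}-e_{ik})$. So $\sigma_i$ only sharpens or flattens the distribution and never reorders neighbors by their own uncertainties. This is exactly the limitation that motivates the pairwise term $\gamma(\sigma_i-\sigma_j)$ in Eq.~\eqref{eq:pugat_intro}.
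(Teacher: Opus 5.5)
Your computation is correct and follows the same direct route as the paper's one-line proof: form the ratio and cancel what is common. You carry it out correctly where the paper does not. The paper asserts that the temperature $(1+\beta\sigma_i)$ cancels in the ratio. In fact only the shared normalizer $Z_i$ cancels, and the temperature survives as the divisor of the exponent in $\exp\bigl((e_{ij}-e_{ik})/(1+\beta\sigma_i)\bigr)$.

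The displayed identity in the statement is therefore false whenever $\beta\sigma_i\neq 0$ and $e_{ij}\neq e_{ik}$. What is actually provable is exactly what you propose: the corrected formula, its independence from $\sigma_j,\sigma_k$, and the order preservation $\operatorname{sign}\log(\alpha_{ij}^{\text{temp}}/\alpha_{ik}^{\text{temp}})=\operatorname{sign}(e_{ij}-e_{ik})$.

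You should add one small hypothesis. If self-loops are included in $\mathcal{N}(i)$, as the $\mathbf{1}[j=i]$ term in Definition~\ref{def:pugat} suggests is the convention, then for $j=i$ the ratio does depend on $\sigma_j=\sigma_i$. So assume $j,k\neq i$, as the paper itself does in the proof of Proposition~\ref{prop:confidence_monotonic}.
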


\begin{proof}
The temperature $(1 + \beta\sigma_i)$ cancels in the ratio since it affects all logits equally.
\end{proof}

\subsubsection{Pairwise Uncertainty Mechanism with Monotonicity Constraint}

\begin{definition}[PU-GAT$^+$ Attention]
\label{def:pugat}
Let $\mathbf{h}_i, \mathbf{h}_j \in \mathbb{R}^d$ be node embeddings and $\sigma_i, \sigma_j \in \mathbb{R}^+$ be uncertainty estimates. The pairwise uncertainty-guided attention is:
\begin{align}
e_{ij} &= \text{LeakyReLU}(\mathbf{a}^\top [\mathbf{W}\mathbf{h}_i \| \mathbf{W}\mathbf{h}_j]) \label{eq:base_attn}\\
u_{ij} &= \gamma \cdot (\sigma_i - \sigma_j) + \delta \cdot \mathbf{1}[j = i] \label{eq:pairwise_term}\\
\alpha_{ij} &= \frac{\exp(e_{ij} + u_{ij})}{\sum_{k \in \mathcal{N}(i)} \exp(e_{ik} + u_{ik})} \label{eq:final_attn}
\end{align}
where $\gamma = \text{Softplus}(\tilde{\gamma})$ with learnable $\tilde{\gamma} \in \mathbb{R}$, ensuring $\gamma \geq 0$, and $\delta \in \mathbb{R}$ is a learnable \emph{self-loop bias} that modulates self-attention strength.
\end{definition}

\textbf{Role of $\delta$ (Self-Loop Bias).} The parameter $\delta$ controls how much a node attends to itself relative to neighbors. When $\delta > 0$, self-attention is encouraged; when $\delta < 0$, neighbor information is preferred. Learned jointly with $\gamma$.

\begin{proposition}[Confidence-Monotonic Attention]
\label{prop:confidence_monotonic}
For PU-GAT$^+$ with $\gamma \geq 0$, the attention ratio satisfies:
\begin{equation}
\frac{\alpha_{ij}}{\alpha_{ik}} = \exp\big((e_{ij} - e_{ik}) + \gamma(\sigma_k - \sigma_j)\big)
\label{eq:ratio}
\end{equation}
This ratio is: (1) increasing in $\sigma_k$, (2) decreasing in $\sigma_j$, and (3) independent of $\sigma_i$.
\end{proposition}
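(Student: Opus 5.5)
The plan is a direct computation from Definition~\ref{def:pugat}, in the same spirit as the proof of Proposition~\ref{prop:temp_fail}: the softmax denominator in Eq.~\eqref{eq:final_attn} is common to all neighbors of $i$, so it cancels in the ratio.

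\textbf{Step 1: write down the ratio.} Fix a target node $i$ and two neighbors $j,k \in \mathcal{N}(i)$. From Eq.~\eqref{eq:final_attn},
\[
\frac{\alpha_{ij}}{\alpha_{ik}} = \exp\big((e_{ij}+u_{ij}) - (e_{ik}+u_{ik})\big).
\]

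\textbf{Step 2: expand the pairwise term.} Substituting Eq.~\eqref{eq:pairwise_term} gives
\[
u_{ij}-u_{ik} = \gamma\big((\sigma_i-\sigma_j)-(\sigma_i-\sigma_k)\big) + \delta\big(\mathbf{1}[j=i]-\mathbf{1}[k=i]\big).
\]
The source uncertainty $\sigma_i$ enters both logits identically, so it cancels, leaving $\gamma(\sigma_k-\sigma_j)$. This yields Eq.~\eqref{eq:ratio}.

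This is the one place that needs care. The self-loop bias cancels only when neither $j$ nor $k$ equals $i$, or trivially when $j=k$. If one of them is the self-loop, the ratio picks up an extra factor $e^{\pm\delta}$. I would therefore state the result for $j,k \neq i$ and remark that in the self-loop case the same monotonicity conclusions hold with the additional constant $\pm\delta$ in the exponent.

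\textbf{Step 3: read off properties (1)--(3).} The base scores $e_{ij}$ and $e_{ik}$ in Eq.~\eqref{eq:base_attn} depend only on the embeddings $\mathbf{h}$, not on the $\sigma$'s. Likewise, $\gamma=\text{Softplus}(\tilde\gamma)$ does not depend on the $\sigma$'s. The log of the ratio is therefore an affine function of $(\sigma_j,\sigma_k)$ with coefficients $-\gamma$ and $+\gamma$, and with no $\sigma_i$ dependence. Since $\exp$ is strictly increasing, the ratio is nondecreasing in $\sigma_k$, nonincreasing in $\sigma_j$, and constant in $\sigma_i$.

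Strictness follows because Softplus takes values in $(0,\infty)$, so in fact $\gamma>0$ whenever it is parametrized as in the definition. Then $\partial_{\sigma_k}\log(\alpha_{ij}/\alpha_{ik})=\gamma>0$ and $\partial_{\sigma_j}\log(\alpha_{ij}/\alpha_{ik})=-\gamma<0$, so the monotonicity in (1) and (2) is strict.

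\textbf{Main obstacle.} There is no genuine analytic difficulty here; the substance lies in the bookkeeping. The self-loop indicator must be handled so the stated identity is not claimed in a case where it is false. The monotonicity is in the ratio, not in $\alpha_{ij}$ alone, since the denominator also contains $\sigma_j$. Finally, one should say explicitly that ``increasing'' is weak if the boundary value $\gamma=0$ is admitted, and strict under the Softplus parametrization.
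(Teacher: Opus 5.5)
Your proposal is correct and follows the paper's proof essentially step for step. The paper likewise assumes $j,k\neq i$, cancels the common softmax denominator and the $\gamma\sigma_i$ terms to obtain the ratio formula, and reads off (1)--(3) from the derivatives $\pm\gamma\cdot\alpha_{ij}/\alpha_{ik}$ and $0$; your observation that $\text{Softplus}>0$ makes (1)--(2) strict is a small sharpening of the paper's $\geq 0$ / $\leq 0$. One caveat on your self-loop remark: if one of $j,k$ equals $i$, then $\sigma_i$ is that neighbor's own uncertainty and enters the exponent with coefficient $\mp\gamma$, so (1)--(2) survive with the extra $\pm\delta$ but (3) genuinely fails, which is why the restriction $j,k\neq i$ is needed for the full statement.
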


The proof is in Appendix~\ref{app:proof_monotonic}.

\begin{remark}[Interpretation]
Proposition~\ref{prop:confidence_monotonic} establishes that PU-GAT$^+$ implements \emph{confidence-monotonic attention}: the relative attention to neighbor $j$ over $k$ increases when $j$ is more confident ($\sigma_j$ smaller) or $k$ is less confident ($\sigma_k$ larger). Crucially, this property is independent of the source node's uncertainty $\sigma_i$---the mechanism compares \emph{neighbor} confidences, not source confidence.
\end{remark}

\subsubsection{Uncertainty Propagation Across Layers}

Each PU-GAT$^+$ layer updates both features and uncertainties:
\begin{align}
\mathbf{h}_i^{(\ell)} &= \sigma_{\text{act}}\left(\sum_{j \in \mathcal{N}(i)} \alpha_{ij}^{(\ell)} \mathbf{W}^{(\ell)} \mathbf{h}_j^{(\ell-1)}\right) \label{eq:feat_update}\\
\sigma_i^{(\ell)} &= \text{Softplus}\left(\mathbf{w}_\sigma^{(\ell)\top} \mathbf{h}_i^{(\ell)} + b_\sigma^{(\ell)}\right) \label{eq:unc_update}
\end{align}
The Softplus activation ensures $\sigma_i^{(\ell)} > 0$. Uncertainties from layer $\ell - 1$ influence attention in layer $\ell$, creating an uncertainty-aware message passing scheme.

\subsubsection{Dual-Stream Temporal Decomposition}

Traffic exhibits distinct trend (daily/weekly patterns) and residual (events, incidents) components. We decompose inputs:
\begin{align}
\mathbf{X}_{\text{trend},t} &= \sum_{k=-w}^{w} \alpha_k \mathbf{X}_{t+k}, \quad \alpha_k = \text{Softmax}(\mathbf{w}_{\text{MA}})_k \label{eq:trend}\\
\mathbf{X}_{\text{res},t} &= \mathbf{X}_t - \mathbf{X}_{\text{trend},t} \label{eq:residual}
\end{align}
Each stream has a dedicated PU-GAT$^+$ encoder. Combined uncertainty accounts for correlation:
\begin{equation}
\sigma_{\text{total}}^2 = \sigma_{\text{trend}}^2 + \sigma_{\text{res}}^2 + 2\rho \cdot \sigma_{\text{trend}} \cdot \sigma_{\text{res}}
\label{eq:combined_unc}
\end{equation}
with learnable correlation $\rho = \tanh(w_\rho) \in (-1, 1)$.

\textbf{Training.} We use heteroscedastic Gaussian NLL (Eq.~\ref{eq:het_nll} in Appendix~\ref{app:training}).

\subsubsection{Spatially-Adaptive Conformal Calibration}

Raw uncertainties $\hat{\sigma}$ from neural networks are typically miscalibrated. We apply split conformal prediction \citep{vovk2005algorithmic,romano2019conformalized} with spatial clustering for heterogeneous calibration:

\textbf{Step 1: Spatial Clustering.} Partition nodes into $K$ clusters $\{\mathcal{C}_k\}_{k=1}^K$ based on historical error distributions using k-means on error statistics $(\bar{e}_i, \text{std}(e_i), \text{skew}(e_i))$.

\textbf{Step 2: Per-Cluster Quantiles.} For each cluster $k$, compute conformity scores on calibration data: $R_t^{(i)} = |y_t^{(i)} - \hat{\mu}_t^{(i)}|/\hat{\sigma}_t^{(i)}$ for $i \in \mathcal{C}_k$, and quantile $q_k = \text{Quantile}_{(1-\alpha)(1 + 1/n_k)}(\{R_t^{(i)}\})$.

\textbf{Step 3: Adaptive Updates.} Following ACI \citep{gibbs2021adaptive}, we adapt $\alpha_t^{(k)}$ to distribution shift:
\begin{equation}
\alpha_{t+1}^{(k)} = \alpha_t^{(k)} + \gamma_{\text{ACI}} \cdot (\alpha - \text{err}_t^{(k)})
\label{eq:aci}
\end{equation}

\textbf{Exchangeability Justification.} Calibration and test windows are non-overlapping and separated by a gap of $\tau_{\text{gap}} = 24$ time steps (6 hours) to mitigate temporal dependence. Under $\beta$-mixing with $\beta(\tau) \leq C\rho^\tau$:

\begin{theorem}[Coverage Guarantee under Mixing]
\label{thm:coverage}
For cluster calibration sets of size $n_k \geq n_{\min}$, calibration-test gap $\tau_{\text{gap}}$, and ACI step size $\gamma_{\text{ACI}} \in (0,1)$:
\begin{equation}
\limsup_{T \to \infty} \frac{1}{T}\sum_{t=1}^T \mathbf{1}[Y_t \notin \mathcal{I}_t] \leq \alpha + O\left(\frac{C\rho^{\tau_{\text{gap}}}}{1-\rho}\right) + O(\gamma_{\text{ACI}})
\end{equation}
\end{theorem}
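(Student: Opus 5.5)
The plan is to split the long-run miscoverage into two parts. The first is the standard adaptive-conformal term coming from the ACI recursion (Eq.~\ref{eq:aci}). The second is a correction for the fact that calibration scores and test scores are only approximately exchangeable under $\beta$-mixing. I will handle each cluster $k$ separately and then average over clusters. Since each test point belongs to exactly one cluster, a bound that holds uniformly over $k$ transfers to the pooled average $\frac1T\sum_t \mathbf{1}[Y_t\notin\mathcal{I}_t]$.

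\textbf{Step 1 (ACI telescoping).} I unroll Eq.~\ref{eq:aci} for cluster $k$:
\[
\alpha^{(k)}_{T+1}-\alpha^{(k)}_1=\gamma_{\text{ACI}}\sum_{t=1}^T(\alpha-\text{err}^{(k)}_t).
\]
This gives
\[
\frac1T\sum_t \text{err}^{(k)}_t=\alpha-\frac{\alpha^{(k)}_{T+1}-\alpha^{(k)}_1}{\gamma_{\text{ACI}}T}.
\]
Following \citet{gibbs2021adaptive}, I then show that $\alpha^{(k)}_t$ stays in the bounded interval $[-\gamma_{\text{ACI}},1+\gamma_{\text{ACI}}]$. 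If $\alpha_t^{(k)}\le 0$, the quantile is infinite, so there is no error and the iterate increases. If $\alpha_t^{(k)}\ge 1$, the interval is empty, so an error occurs and the iterate decreases. On this fully adaptive path the average error converges to $\alpha$ deterministically, with no exchangeability needed.

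The theorem, however, is stated for intervals built from the calibrated quantile $q_k$ together with a step-size effect. So I will instead make the weaker comparison to the oracle fixed-level interval. The deviation of $\alpha_t^{(k)}$ from its stationary target is controlled by the size of the ACI fluctuation, which is $O(\gamma_{\text{ACI}})$ per step. This produces the $O(\gamma_{\text{ACI}})$ term.

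\textbf{Step 2 (mixing correction).} Fix a test time $t$. The calibration block ends at least $\tau_{\text{gap}}$ steps before it. By Berbee's coupling lemma, there is a copy of the calibration scores that is independent of the test score and differs from the original with probability at most $\beta(\tau_{\text{gap}})\le C\rho^{\tau_{\text{gap}}}$. For the coupled copy, the split-conformal quantile at level $(1-\alpha)(1+1/n_k)$ gives conditional miscoverage at most $\alpha$, using $n_k\ge n_{\min}$ and approximate stationarity of the scores $R_t^{(i)}$.

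Dependence among test points can contaminate more than a single lag: indicator errors at nearby times share the same calibration set. To account for this, I will sum the mixing coefficients over all lags beyond the gap, $\sum_{s\ge\tau_{\text{gap}}}C\rho^{s}=C\rho^{\tau_{\text{gap}}}/(1-\rho)$. This is the stated term. Finally, a Birkhoff or mixing strong law of large numbers (ergodicity follows from $\beta$-mixing) turns the expected averages into the $\limsup$ of the empirical averages almost surely.

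\textbf{Main obstacle.} The hardest step is making Steps 1 and 2 interact cleanly. ACI makes $\alpha_t^{(k)}$ depend on past errors, so the quantile level is itself random and depends on the process. A plain coupling argument for a fixed level therefore does not apply directly. My first attempt will be a monotonicity/sandwich argument: on the event that $\alpha_t^{(k)}\in[\alpha-c\gamma_{\text{ACI}},\alpha+c\gamma_{\text{ACI}}]$, the interval is nested between two fixed-level conformal intervals, and the mixing bound applies to each of them. The complementary event then needs to be controlled by the boundedness from Step 1, which gives an $O(\gamma_{\text{ACI}})$ contribution.

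If the sandwich argument fails, the fallback is to state the result with constants that depend on a Lipschitz bound for the score distribution function near its $(1-\alpha)$-quantile. That assumption would then have to be added to the hypotheses of the theorem.
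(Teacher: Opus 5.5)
The route you commit to after Step~1 has genuine gaps. In the sandwich argument, the range bound $\alpha^{(k)}_t\in[-\gamma_{\text{ACI}},1+\gamma_{\text{ACI}}]$ gives no control on how often $\alpha^{(k)}_t$ leaves $[\alpha-c\gamma_{\text{ACI}},\alpha+c\gamma_{\text{ACI}}]$. Increments of size $O(\gamma_{\text{ACI}})$ do not make the accumulated deviation $O(\gamma_{\text{ACI}})$. In the smooth stationary case the iterate fluctuates on the scale $\sqrt{\gamma_{\text{ACI}}}$ around a calibration-set-dependent level that need not equal $\alpha$, and under shift it can sit anywhere in its range. So the complementary event is the typical one, not an $O(\gamma_{\text{ACI}})$-rare one. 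The Lipschitz-CDF fallback does not help, since it still needs $\alpha^{(k)}_t$ to stay near a fixed level.

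Step~2 also fails as written. Split conformal is only \emph{marginally} valid, and only if the calibration scores are exchangeable with the test score. Berbee coupling makes the calibration block independent of the test point, but it does not make the dependent scores inside the block exchangeable. So ``conditional miscoverage at most $\alpha$'' is false. Moreover, for a fixed level and a fixed calibration set, the time-averaged error converges to the conditional miscoverage $1-F_k(\hat q_k)$. This is a random quantity with fluctuations of order $n_k^{-1/2}$ or larger, so no strong law yields an almost-sure bound of the stated, $n_k$-free form. Finally, the $\sum_{s\ge\tau_{\text{gap}}}C\rho^{s}$ term is fitted to the statement rather than derived. A test point at distance $\tau_{\text{gap}}+s$ from the calibration block has coupling error $\beta(\tau_{\text{gap}}+s)$, and the Ces\`aro mean of these errors over test times tends to $0$. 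Sharing the calibration set affects the concentration of the average, not its mean.

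None of this is needed: your Step~1 already proves the theorem, in a stronger form. The intervals $\mathcal{I}_t$ in the statement \emph{are} the ACI intervals, so there was no reason to retreat to a fixed-level comparison. You need the sign of Eq.~\ref{eq:aci}; the appendix's Eq.~\ref{eq:aci_app} reverses it, and the recursion is then unstable. With the correct sign, $\alpha^{(k)}_t\le0$ pushes the level $(1-\alpha^{(k)}_t)(1+1/n_k)$ above $1$, so $q_k=+\infty$, $\text{err}^{(k)}_t=0$, and the iterate increases. Otherwise it drops by at most $\gamma_{\text{ACI}}(1-\alpha)$. Hence, starting from $\alpha^{(k)}_1\in[0,1]$, we have $\alpha^{(k)}_t\ge-\gamma_{\text{ACI}}$ for all $t$. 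Your telescoped identity then gives, on every sample path, $\sum_{t\le T_k}\text{err}^{(k)}_t\le\alpha T_k+(1+\gamma_{\text{ACI}})/\gamma_{\text{ACI}}$, where $T_k$ counts cluster-$k$ updates. Combining the $K$ clusters gives $\frac1T\sum_t\mathbf{1}[Y_t\notin\mathcal{I}_t]\le\alpha+K(1+\gamma_{\text{ACI}})/(\gamma_{\text{ACI}}T)$. So the $\limsup$ is at most $\alpha$ with no mixing, gap, or $n_{\min}$ hypothesis, and the two $O(\cdot)$ terms are nonnegative slack.

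The paper's appendix proof instead follows roughly the decomposition your detour aims at. It uses a total-variation approximate-exchangeability bound under mixing, split-conformal validity with a DKW concentration step, and ACI cited for an $O(\gamma_{\text{ACI}})$ excess under a bounded-shift assumption. Its mixing error $2Cn_k\rho/(1-\rho)$ grows with $n_k$ and never involves $\tau_{\text{gap}}$. The bound it actually assembles is $\alpha+O(n_{\min}^{-1/2})+O(\gamma_{\text{ACI}})$, not the stated $\rho^{\tau_{\text{gap}}}$ form, so there is no derivation of that term to reproduce. The pathwise ACI argument is the proof. Its excess is $O(1/(\gamma_{\text{ACI}}T))$ rather than $O(\gamma_{\text{ACI}})$, and it needs none of the paper's distributional assumptions.
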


\emph{Proof.} See Appendix~\ref{app:proof_coverage}. The $\rho^{\tau_{\text{gap}}}$ term arises from the calibration-test temporal gap, making the TV bound well-controlled for $\tau_{\text{gap}} = 24$ steps. \hfill $\square$

\subsection{CRFN-BY: Dependence-Robust Anomaly Detection}
\label{sec:crfn}

\subsubsection{Uncertainty-Normalized Residuals}

We normalize forecast residuals by calibrated uncertainty:
\begin{equation}
z_t^{(i)} = \frac{y_t^{(i)} - \hat{\mu}_t^{(i)}}{\hat{\sigma}_t^{(i)} + \epsilon}
\label{eq:normalized_residual}
\end{equation}
with $\epsilon = 10^{-6}$ for numerical stability. Under correct calibration and Gaussian errors, $z \sim \mathcal{N}(0, 1)$ for normal (non-anomalous) data.

\subsubsection{Context-Conditioned Normalizing Flow}

We model the density of normalized residuals via a conditional Real-NVP flow \citep{dinh2017density,kobyzev2021normalizing,papamakarios2021normalizing}:
\begin{equation}
p_\theta(z | c) = p_0(f_\theta(z; c)) \cdot \left|\det \frac{\partial f_\theta}{\partial z}\right|
\label{eq:flow_density}
\end{equation}
where $p_0 = \mathcal{N}(0, I)$ is the base distribution and $f_\theta$ is a composition of affine coupling layers conditioned on context $c$. The context encodes spatial and temporal information:
\begin{equation}
c_t^{(i)} = \text{CrossAttn}(z_t^{(i)}, \{z_t^{(j)}\}_{j \in \mathcal{N}(i)}) \oplus \text{CausalTF}(\{z_s^{(i)}\}_{s=t-\tau}^{t-1})
\label{eq:context}
\end{equation}
Anomaly scores are negative log-likelihoods: $s_t^{(i)} = -\log p_\theta(z_t^{(i)} | c_t^{(i)})$.

\subsubsection{Conformalized P-Value Construction}

The mapping from anomaly scores to valid p-values is essential for FDR guarantees:

\begin{definition}[Conformalized P-Values]
\label{def:pvalue}
Let $\mathcal{C} = \{s_1, \ldots, s_n\}$ be calibration scores from verified normal data. For test score $s_t^{(i)}$:
\begin{equation}
p_t^{(i)} = \frac{1 + \sum_{j=1}^n \mathbf{1}[s_j \geq s_t^{(i)}]}{1 + n}
\label{eq:pvalue}
\end{equation}
\end{definition}

Under exchangeability, $\mathbb{P}(p_t^{(i)} \leq \alpha | H_0^{(i)}) \leq \alpha$ (Proposition~\ref{prop:pvalue_valid} in Appendix).

\subsubsection{Contamination-Robust Calibration}

The calibration set may contain undetected anomalies. We use trimmed calibration:

\begin{lemma}[Trimming Preserves Super-Uniformity]
\label{lem:trimming}
Let $\mathcal{C}_{\text{trim}} = \mathcal{C} \setminus \{s : s \geq q_{1-\tau}(\mathcal{C})\}$ and $p_{\text{trim}}$ be the corresponding p-value. Under the null with at most $\tau'$ contamination fraction in calibration:
\begin{equation}
\mathbb{P}_{P_0}(p_{\text{trim}}(s) \leq \alpha) \leq \alpha + \tau + \tau'
\end{equation}
\end{lemma}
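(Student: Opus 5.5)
We compare the trimmed p-value to an oracle conformal p-value built from clean calibration scores. Let $m=|\mathcal{C}_{\text{trim}}|$ and write $p_{\text{trim}}(s)=(1+\sum_{j\in\mathcal{C}_{\text{trim}}}\mathbf{1}[s_j\ge s])/(1+m)$. Model the calibration set as $n$ scores of which at most $\tau' n$ are contaminated (drawn from an arbitrary distribution). The remaining $n_0\ge(1-\tau')n$ scores $S_1,\dots,S_{n_0}$ are i.i.d.\ from the null score distribution $P_0$, exchangeable with the null test score $s$. The argument has three steps.

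\textbf{Step 1 (oracle validity).} Define the oracle p-value $p^\star(s)=(1+\sum_{j\le n_0}\mathbf{1}[S_j\ge s])/(1+n_0)$. By Proposition~\ref{prop:pvalue_valid}, which is the standard rank argument under exchangeability of $(S_1,\dots,S_{n_0},s)$, we have $\mathbb{P}(p^\star(s)\le\alpha)\le\alpha$.

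\textbf{Step 2 (deterministic comparison).} We show that on every realization $p_{\text{trim}}(s)\ge p^\star(s)-(\tau+\tau')$, up to $O(1/n)$ rounding. Trimming removes the top $\lceil\tau n\rceil$ scores. Contaminated scores that survive can only increase the count $\#\{j\in\mathcal{C}_{\text{trim}}: s_j\ge s\}$, so they never hurt. The count of clean scores exceeding $s$ loses at most $\tau n$ elements through trimming. The denominator changes from $1+n_0$ to $1+m$, with $|m-n_0|\le(\tau+\tau')n$. Writing the numerator as $a\ge 1+n_0p^\star-1-\tau n$, one checks
\[
p_{\text{trim}}\ \ge\ \frac{(1+n_0)p^\star-\tau n}{1+m}.
\]
The worst case for this ratio is when the denominator is largest, $m\le n$, which gives $p_{\text{trim}}\ge p^\star(1-\tau')-\tau$ up to $O(1/n)$, and hence $p_{\text{trim}}\ge p^\star-\tau-\tau'$ because $p^\star\le 1$.

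\textbf{Step 3 (conclude).} We have $\{p_{\text{trim}}\le\alpha\}\subseteq\{p^\star\le\alpha+\tau+\tau'\}$. Applying Step 1 at level $\alpha+\tau+\tau'$ (clipped at 1) gives $\mathbb{P}_{P_0}(p_{\text{trim}}(s)\le\alpha)\le\alpha+\tau+\tau'$. Any leftover $O(1/n)$ slack from the integer parts of $\tau n$ can be absorbed by the $+1$ terms in the numerators, or stated as a lower-order term.

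\textbf{Main obstacle.} The difficulty is the denominator bookkeeping in Step 2. We cannot simply drop $\tau'$ from the denominator, because contaminated points may sit both below and above $s$, changing numerator and denominator at once. The plan is to bound the ratio monotonically: the map $(a,m)\mapsto(1+a)/(1+m)$ is increasing in $a$ and decreasing in $m$. This lets us take the worst case separately, with the numerator as small as possible (all trimmed points clean and above $s$, no contaminated point above $s$) and the denominator as large as possible. We must also handle the data dependence of the trimming threshold $q_{1-\tau}(\mathcal{C})$, which involves the test-independent calibration set only. The deterministic comparison in Step 2 holds pathwise regardless of which points get trimmed, so it sidesteps that dependence, and exchangeability is needed only in Step 1.
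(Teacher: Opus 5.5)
Your proof is correct, but it takes a different route from the paper.

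\textbf{The paper's route.} The paper splits on whether the test score lands in the trimmed upper tail. It charges probability at most $\tau+\tau'$ to that event and asserts that otherwise ``trimming doesn't affect its rank'' and the p-value ``remains valid.'' It then compresses this into a chain involving $R_{\text{full}}-\tau' n$ that is never actually evaluated.

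\textbf{Your route.} You couple $p_{\text{trim}}$ pathwise to the clean-only oracle $p^\star$ and shift the level. Your bookkeeping holds. The numerator $1+a$ is positive, so $m\le n$ gives
\[
p_{\text{trim}}\ \ge\ \frac{1+a}{1+n}\ \ge\ \frac{(1+n_0)p^\star-\tau n}{1+n}\ \ge\ (1-\tau')p^\star-\tau .
\]
No sign issue arises even when your numerator lower bound is negative.

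\textbf{What each buys.} Your argument needs no case analysis. It handles the data-dependent threshold and the arbitrary placement of contaminated points deterministically, and it invokes exchangeability only once. It also makes explicit the assumption the paper leaves implicit: the clean subset must be exchangeable with the test score. The argument even extends to value-dependent corruption. Build the oracle on the $n$ original uncorrupted scores and charge each replacement as one lost numerator term. The paper's case split, done correctly, shows more clearly where the excess comes from; in the clean case it gives roughly $\alpha(1-\tau)+\tau$. As written, however, its Case~1 claim is inaccurate. When $s$ lies below the threshold, trimming removes points above $s$ from both numerator and denominator, so the p-value does change.

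\textbf{A correction to Step 3.} The $O(1/n)$ rounding slack cannot in general be ``absorbed by the $+1$ terms,'' because the exact inequality is false. Take $\tau'=0$, $n=100$, $\tau=0.02$ and $q_{1-\tau}(\mathcal{C})=s_{(98)}$, so three points are trimmed. Then $p_{\text{trim}}\le 4/98$ exactly when at most six calibration scores are $\ge s$. That event has probability $7/101\approx 0.069$, which exceeds $4/98+0.02\approx 0.061$. Only your second option is provable: carry an additive $O(1/n)$ term, or define trimming to remove exactly $\lfloor\tau n\rfloor$ points, in which case your bound is exact. The paper hides this by setting $n'=(1-\tau)n$. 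Relatedly, your claim that at most $\tau n$ (up to rounding) clean points are trimmed needs almost surely no ties at the threshold, i.e.\ continuous null scores. With an atom at $q_{1-\tau}$ the lemma fails outright.
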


\emph{Proof sketch.} See Appendix~\ref{app:proof_trimming}. Trimming removes potential contamination but shifts rank distribution by at most $\tau n$ positions. \hfill $\square$

\textbf{Selection of $\tau$ and $\tau'$.} We set $\tau = 0.02$ (2\%) based on domain knowledge that anomaly prevalence in urban traffic rarely exceeds 2\% per time step during normal operations \citep{zheng2014urban}. The bound $\tau' \leq \tau$ holds when contamination does not exceed the trim fraction---a mild condition satisfied when calibration data is collected from verified normal periods. This ensures that trimming removes potential outliers while preserving p-value validity with error at most $\alpha + 0.04$.

\subsubsection{Benjamini-Yekutieli FDR Control}

BH requires independence or PRDS among p-values. Traffic data exhibits spatial autocorrelation ($\text{Moran's } I > 0$) and temporal persistence ($\text{ACF}(1) > 0.7$), violating these assumptions. We employ BY \citep{benjamini2001control}:
\begin{equation}
\begin{aligned}
k^* &= \max\left\{k : p_{(k)} \leq \frac{k \cdot \alpha}{m \cdot c_m}\right\}, \quad
c_m = \sum_{i=1}^m \frac{1}{i} \approx \\
&\ln(m) + 0.577
\end{aligned}
\label{eq:by}
\end{equation}

\begin{theorem}[FDR Control under Arbitrary Dependence]
\label{thm:fdr}
The BY procedure with conformalized p-values controls $\text{FDR} \leq \alpha$ under arbitrary spatial dependence per time step.
\end{theorem}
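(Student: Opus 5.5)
The argument reduces to the classical Benjamini--Yekutieli theorem. That theorem needs only one input: every null p-value is super-uniform marginally, $\mathbb{P}(p_t^{(i)} \le u \mid H_0^{(i)}) \le u$ for all $u\in[0,1]$. It places no requirement on the joint law of the $m=N$ p-values at a fixed time step.

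\textbf{Step 1: marginal validity.} For each null location $i$, I will invoke Proposition~\ref{prop:pvalue_valid}. When the calibration scores $s_1,\dots,s_n$ and the test score $s_t^{(i)}$ are exchangeable, the rank of $s_t^{(i)}$ among these $n+1$ scores is uniform on $\{1,\dots,n+1\}$, with ties broken conservatively. This gives $\mathbb{P}(p_t^{(i)}\le u)\le \lfloor u(n+1)\rfloor/(n+1)\le u$. The statement conditions on $i$ alone, so dependence across locations never enters. The same calibration set $\mathcal{C}$ is shared across all $i$, which makes the p-values dependent; BY is designed exactly to tolerate this.

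\textbf{Step 2: the BY bound.} I will reproduce the standard argument. Write $q_k = k\alpha/(m c_m)$ and let $R$ be the number of rejections. The false discovery proportion decomposes as
\begin{equation*}
\mathrm{FDR}=\sum_{i\in\mathcal{H}_0}\sum_{k=1}^m \frac{1}{k}\,\mathbb{P}\big(p_i\le q_k,\ R=k\big).
\end{equation*}
Next, split the event $\{p_i\le q_k\}$ into the disjoint slices $\{p_i\in(q_{j-1},q_j]\}$ for $j\le k$. Swap the order of summation, and use $\sum_{k\ge j}\frac1k\mathbb{P}(p_i\in(q_{j-1},q_j],R=k)\le \frac1j\mathbb{P}(p_i\in(q_{j-1},q_j])$. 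Summing over $j$ and using super-uniformity to bound the telescoping increments, each null contributes at most $\frac{\alpha}{m c_m}\sum_{j=1}^m \frac1j=\frac{\alpha}{m}$. Summing over the $m_0\le m$ nulls gives $\mathrm{FDR}\le \alpha m_0/m\le\alpha$. No assumption on the joint distribution is used anywhere; this is precisely the ``arbitrary dependence'' claim.

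\textbf{Main obstacle.} The delicate step is Step 1 rather than Step 2. The exchangeability behind Proposition~\ref{prop:pvalue_valid} has to hold between the calibration scores and each null test score. In the pipeline, however, the scores come from the flow conditioned on the context $c_t^{(i)}$ (Eq.~\ref{eq:context}), and $\mathcal{C}_{\mathrm{trim}}$ is trimmed.

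For the context, I would state explicitly that calibration scores are computed with a flow and context encoder trained on data disjoint from both the calibration and test sets. Under $H_0^{(i)}$, the pair $(z_t^{(i)},c_t^{(i)})$ is then drawn from the same law as the calibration pairs, so conditioning on the trained model preserves exchangeability.

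For trimming, Lemma~\ref{lem:trimming} yields only $\mathbb{P}(p\le u)\le u+\tau+\tau'$. I would present the exact-$\alpha$ theorem for the untrimmed conformal p-value of Definition~\ref{def:pvalue}, and add a remark that with trimmed p-values the same telescoping gives $\mathrm{FDR}\le \alpha + m_0(\tau+\tau')\,c_m$-type slack. Alternatively, one can use the proof's weaker inequality $\sum_j \frac1j(\ldots)$ and accept the inflation. This keeps the stated guarantee honest "per time step" without needing independence across locations.
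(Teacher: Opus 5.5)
Your proposal is correct and follows the paper's route: marginal super-uniformity from Proposition~\ref{prop:pvalue_valid} fed into the Benjamini--Yekutieli theorem, which the paper cites rather than re-proves; your caveat that the exact-$\alpha$ bound covers the untrimmed p-values of Definition~\ref{def:pvalue}, while the pipeline uses $\mathcal{C}_{\text{trim}}$ (valid only up to $\alpha+\tau+\tau'$ by Lemma~\ref{lem:trimming}), is a fair point the paper's proof passes over. One step to tighten: super-uniformity does not bound each slice probability $\mathbb{P}(p_i\in(q_{j-1},q_j])$ by $q_j-q_{j-1}$ (this fails for discrete conformal p-values), so finish Step~2 by summation by parts, $\sum_{j=1}^m \frac{1}{j}\bigl[F(q_j)-F(q_{j-1})\bigr]\le \frac{F(q_m)}{m}+\sum_{j=1}^{m-1}\frac{F(q_j)}{j(j+1)}$ with $F(u)=\mathbb{P}(p_i\le u)\le u$, giving $\alpha/m$ per null; the same computation with $F(u)\le u+\tau+\tau'$ gives total slack $m_0(\tau+\tau')$ rather than a $c_m$ factor, which is vacuous for $m_0$ in the hundreds.
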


\textbf{Temporal Considerations.} BY controls spatial FDR at each time $t$. For aggregate temporal FDR, online FDR methods \citep{ramdas2017online,javanmard2018online} apply. We report per-step FDR as the relevant metric for real-time detection.

\subsubsection{Empirical Dependence Verification}

We verify spatial-temporal dependence via block bootstrap:
1. Define blocks of size $b_t$ time steps $\times$ $b_s$ spatial hops.\\
2. Sample $B = 1000$ block-bootstrap replicates.\\
3. Compute empirical FDR on each replicate.\\
4. Verify $\text{FDR}_{0.95} \leq \alpha$

We test blocks of size $\{5, 10, 20\} \times \{1, 2, 4\}$ and report block correlation $\rho_{\text{block}}$.

\subsection{LyCon-WRL$^+$: Certified Safe World-Model RL}
\label{sec:lycon}

\subsubsection{Explicit Safety Constraint Specification}

\begin{definition}[Traffic Safety Constraints]
\label{def:constraints}
Let $q_i(t)$ denote queue length at intersection $i$, $w_i(t)$ maximum waiting time, and $\theta_i(t)$ throughput:
\begin{align}
C_1(\pi): & \quad \mathbb{E}_\pi\left[\frac{1}{N}\sum_i q_i(t)\right] \leq d_{\text{queue}} = 50 \text{ vehicles} \label{eq:c1}\\
C_2(\pi): & \quad \mathbb{E}_\pi\left[\max_i w_i(t)\right] \leq d_{\text{wait}} = 120 \text{ seconds} \label{eq:c2}\\
C_3(\pi): & \quad \mathbb{E}_\pi\left[\frac{1}{N}\sum_i \theta_i(t)\right] \geq d_{\text{through}} = 0.8 \cdot \theta_{\text{base}} \label{eq:c3}
\end{align}
\end{definition}

\subsubsection{Spatial Aggregation: Grid Cells to Intersections}

Forecasting operates on $N_{\text{grid}} = 293$ cells; control on $N_{\text{int}} = 16$ intersections:

\begin{definition}[Spatial Aggregation Mapping]
\label{def:spatial_agg}
For intersection $j$ with coverage area $\mathcal{A}_j$:
\begin{align}
\hat{\mu}_j^{\text{int}} &= \sum_{i: \text{cell } i \cap \mathcal{A}_j \neq \emptyset} w_{ij} \cdot \hat{\mu}_i^{\text{grid}} \label{eq:mu_agg}\\
(\hat{\sigma}_j^{\text{int}})^2 &= \sum_{i,k} w_{ij} w_{kj} \cdot \text{Cov}(\hat{\mu}_i, \hat{\mu}_k) \label{eq:sigma_agg}
\end{align}
where $w_{ij} = |\text{cell } i \cap \mathcal{A}_j| / |\mathcal{A}_j|$ and covariances are approximated as $\rho_{ik} \cdot \hat{\sigma}_i \hat{\sigma}_k$ with $\rho_{ik} = \exp(-d_{ik}/\ell)$ (see Appendix~\ref{app:spatial} for ablation).
\end{definition}

\subsubsection{State Augmentation with Upstream Uncertainty}

The RL state incorporates forecasting and anomaly detection outputs:
\begin{equation}
s_t = \big[x_t^{\text{local}}, \hat{\mu}_t^{\text{int}}, \hat{\sigma}_t^{\text{int}}, p_t^{\text{int}}, \mathbf{1}_{\text{anom},t}, t_{\text{cyclic}}\big]
\label{eq:rl_state}
\end{equation}
where $x_t^{\text{local}}$ contains intersection-level queue/wait/throughput, $\hat{\mu}_t^{\text{int}}, \hat{\sigma}_t^{\text{int}}$ are aggregated forecasts and uncertainties, $p_t^{\text{int}}$ are aggregated anomaly p-values, $\mathbf{1}_{\text{anom},t}$ are binary anomaly flags, and $t_{\text{cyclic}}$ encodes time-of-day/day-of-week.

\subsubsection{Ensemble World Model with Verified Error Bounds}

We train a 5-member ensemble $\{W_\phi^{(k)}\}_{k=1}^5$ of world models:
\begin{align}
\mu_W(s, a) &= \frac{1}{5}\sum_{k=1}^5 W_\phi^{(k)}(s, a) \label{eq:ensemble_mean}\\
\sigma_W^2(s, a) &= \frac{1}{5}\sum_{k=1}^5 \left(W_\phi^{(k)}(s, a) - \mu_W(s, a)\right)^2 \label{eq:ensemble_var}
\end{align}

\textbf{Model Error Verification.} On held-out transitions $\mathcal{D}_{\text{test}}$:
\begin{equation}
\epsilon_{\text{model}} = \frac{1}{|\mathcal{D}_{\text{test}}|}\sum_{(s,a,s') \in \mathcal{D}_{\text{test}}} \frac{\|s' - \mu_W(s, a)\|}{\|s'\| + \epsilon_{\text{floor}}}
\label{eq:model_error}
\end{equation}
with $\epsilon_{\text{floor}} = 0.1$ to handle small $\|s'\|$ (see Appendix~\ref{app:model_error} for sensitivity).

\subsubsection{Lyapunov Function Design}

\begin{definition}[Lyapunov Network]
\label{def:lyapunov}
\begin{equation}
L_\psi(s) = \|\text{MLP}_\psi(s)\|_2^2 + \eta \cdot \|s - s_{\text{safe}}\|_Q^2
\label{eq:lyapunov}
\end{equation}
where $s_{\text{safe}}$ is the uncongested baseline state and $Q \succ 0$ is a learned positive-definite weighting matrix.
\end{definition}

The quadratic term provides interpretability (Lyapunov value increases with distance from safe state) while the neural component captures nonlinear safety structure.

\begin{definition}[Lyapunov Decrease Constraint]
\label{def:lyap_decrease}
Action $a$ is \emph{Lyapunov-safe} at state $s$ if:
\begin{equation}
\mathbb{E}_{s' \sim W_\phi(\cdot|s,a)}[L_\psi(s')] - L_\psi(s) \leq -\kappa \cdot d_C(s) + \delta_{\text{slack}}
\label{eq:lyap_constraint}
\end{equation}
where $d_C(s) = \sum_k \max(0, C_k(s) - d_k)$ is instantaneous constraint violation.
\end{definition}

\subsubsection{Certified Lipschitz Bounds via Spectral Normalization}

\begin{proposition}[Upper Bounds via Spectral Normalization]
\label{prop:lipschitz}
For SN-regularized networks \citep{miyato2018spectral} with $\|W_\ell\|_2 \leq 1$ enforced during training:
\begin{itemize}
    \item $\bar{L}_L$: Lyapunov Lipschitz upper bound from $\prod_\ell \|W_\ell^{(L)}\|_2$
    \item $\bar{J}_W$: world model Jacobian upper bound from $\prod_\ell \|W_\ell^{(W)}\|_2$
\end{itemize}
These are enforced \emph{upper bounds} by design choice---spectral normalization constrains weight matrices during training, providing architecture-level Lipschitz control. We note this is not formal verification-level certification (e.g., \citet{fazlyab2019efficient}); rather, it is a practical bound guaranteed by the training procedure.
\end{proposition}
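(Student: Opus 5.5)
The claim to establish is that a network whose layers are 1-Lipschitz activations composed with linear maps $W_\ell$ satisfying $\|W_\ell\|_2 \leq 1$ has global Lipschitz constant at most $\prod_\ell \|W_\ell\|_2 \leq 1$. Applied to the Lyapunov network and to each world-model ensemble member, this gives the enforced quantities $\bar{L}_L$ and $\bar{J}_W$. The argument is a layerwise composition bound. For $f = \phi_k \circ W_k \circ \cdots \circ \phi_1 \circ W_1$, each linear map is Lipschitz with constant $\|W_\ell\|_2$ in the Euclidean norm, and each activation (ReLU, LeakyReLU with slope $\leq 1$, tanh, Softplus) is 1-Lipschitz. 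Lipschitz constants multiply under composition, so $\mathrm{Lip}(f) \leq \prod_\ell \|W_\ell\|_2$. Because $f$ is Lipschitz, by Rademacher's theorem its Jacobian exists almost everywhere with operator norm bounded by the same product, and that is the quantity called $\bar{J}_W$ for $W_\phi^{(k)}$. Biases do not affect the bound.

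\textbf{Extension to ensembles and residual architectures.} The bound carries over to the ensemble mean $\mu_W$ in Eq.~\ref{eq:ensemble_mean}, since an average of $\bar{J}_W$-Lipschitz maps is $\bar{J}_W$-Lipschitz. If a member has a residual parameterization $s + g(s,a)$, the constant becomes $1 + \mathrm{Lip}(g)$. This may be the origin of the $(1+\bar{J}_W)$ factor in Eq.~\ref{eq:eps_star_intro}, and I would state it explicitly.

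\textbf{The Lyapunov function is not covered by the product alone.} $L_\psi$ in Eq.~\ref{eq:lyapunov} contains two quadratic terms, $\|\mathrm{MLP}_\psi(s)\|_2^2$ and $\eta\|s - s_{\text{safe}}\|_Q^2$, and quadratics are not globally Lipschitz. I would restrict to a bounded state set $\mathcal{S}$ with $\|s\| \leq R$, which is reasonable because queues, waits and throughputs are physically bounded. On that set the gradient of $L_\psi$ is $2 J_{\mathrm{MLP}}^\top \mathrm{MLP}_\psi(s) + 2\eta Q(s - s_{\text{safe}})$. Bounding its norm gives
\[
\bar{L}_L = 2\Big(\prod_\ell \|W^{(L)}_\ell\|_2\Big)\, M + 2\eta\,\|Q\|_2\,(R + \|s_{\text{safe}}\|),
\]
where $M$ bounds $\|\mathrm{MLP}_\psi(s)\|$ on $\mathcal{S}$. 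In turn $M \leq \|\mathrm{MLP}_\psi(0)\| + R \prod_\ell \|W^{(L)}_\ell\|_2$. With $Q$ parameterized as, say, $Q = BB^\top + \epsilon I$, the norm $\|Q\|_2$ must also be controlled, either by spectrally normalizing $B$ or by tracking it.

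\textbf{Main obstacle: what spectral normalization actually enforces.} Spectral normalization computes $\|W_\ell\|_2$ by power iteration, which produces an estimate that can lie below the true norm. I would handle this in one of two ways: certify the bound post hoc by computing exact singular values after training, or use the rigorous estimate $\sigma_{\max} \leq \|W\|_F$ or the interlacing bounds. Either way, the proposition should be stated with the enforced bound holding "up to power-iteration accuracy". This matches the caveat already in the proposition that the result is a practical bound guaranteed by the training procedure, not formal verification-level certification.
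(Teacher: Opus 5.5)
Your argument is correct and follows the paper's route. The paper likewise just invokes the composition bound $\mathrm{Lip}\le\prod_\ell\|W_\ell\|_2$ for the spectrally normalized layers, and in Appendix~\ref{app:safety_details} bounds $L_\psi$ by differentiating its two quadratic terms, getting $\bar{L}_L = 2\|\text{SN-MLP}(s)\|_\infty + 2\eta\|Q\|_2$. Your version is the more accurate one: that formula drops the $\|s-s_{\text{safe}}\|$ factor on the $Q$-term, and a global bound of this kind exists only on a bounded state set, which your radius-$R$ restriction and power-iteration caveat make explicit. One correction on a side remark: the $(1+\bar{J}_W)$ factor in $\varepsilon^*$ does not come from a residual parameterization but from the bound $\|s'_{\text{true}}\|\lesssim(1+\|J_W\|_{\text{op}})\|s\|$ in the proof of Theorem~\ref{thm:safety}, so $\bar{J}_W$ should not itself absorb a residual ``$1+$''.
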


\begin{theorem}[Safety under Bounded Model Error]
\label{thm:safety}
If $\varepsilon_{\text{model}} < \varepsilon^*$ where:
\begin{equation}
\varepsilon^* = \frac{\delta_{\text{slack}} + \kappa \cdot \bar{d}_C}{\bar{L}_L \cdot (1 + \bar{J}_W)}
\label{eq:eps_star}
\end{equation}
with $\bar{L}_L, \bar{J}_W$ from spectral normalization, then $\limsup_{t \to \infty} \mathbb{E}[d_C(s_t)] \leq \delta_{\text{slack}}/\kappa$.
\end{theorem}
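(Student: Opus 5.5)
The plan is to compare the true one-step Lyapunov change with the change predicted under the world model. I then show that the model-induced discrepancy is absorbed by the slack in Definition~\ref{def:lyap_decrease}, and close with a telescoping or averaging argument on the expected Lyapunov decrease.

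\textbf{Step 1: transfer the decrease condition from the model to the true dynamics.} Fix a state $s$ and a Lyapunov-safe action $a$, and let $s'$ be the true next state. Since $L_\psi$ is $\bar{L}_L$-Lipschitz by Proposition~\ref{prop:lipschitz},
\[
\bigl|L_\psi(s') - L_\psi(\mu_W(s,a))\bigr| \le \bar{L}_L \,\|s' - \mu_W(s,a)\|.
\]
The expectation over $W_\phi(\cdot\mid s,a)$ in \eqref{eq:lyap_constraint} also differs from $L_\psi(\mu_W(s,a))$, by a spread term. I will bound that term by the model Jacobian: perturbing $(s,a)$ by a relative amount $\varepsilon_{\text{model}}$ moves the prediction by at most $\bar{J}_W\varepsilon_{\text{model}}$ in normalized units. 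This is where the factor $(1+\bar{J}_W)$ comes from: one unit for the direct prediction error, and $\bar{J}_W$ for error propagated through $W_\phi$.

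Combining the two gives, in normalized state units,
\[
\mathbb{E}[L_\psi(s_{t+1}) \mid s_t] - L_\psi(s_t) \le -\kappa\, d_C(s_t) + \delta_{\text{slack}} + \bar{L}_L(1+\bar{J}_W)\,\varepsilon_{\text{model}}.
\]

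\textbf{Step 2: absorb the error term using the threshold.} Because $\varepsilon_{\text{model}} < \varepsilon^*$, the error term is strictly less than $\delta_{\text{slack}} + \kappa \bar{d}_C$. I read $\bar{d}_C$ as the violation margin that the constraint-aware policy keeps in reserve, so that on states the policy actually visits the inequality above tightens to
\[
\mathbb{E}[L_\psi(s_{t+1}) \mid s_t] - L_\psi(s_t) \le -\kappa\, d_C(s_t) + \delta_{\text{slack}}.
\]

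\textbf{Step 3: telescope.} Take total expectations and sum from $t=0$ to $T-1$. Since $L_\psi \ge 0$ (a squared norm plus a $Q$-quadratic term with $Q \succ 0$),
\[
\frac{\kappa}{T}\sum_{t<T}\mathbb{E}[d_C(s_t)] \le \frac{L_\psi(s_0)}{T} + \delta_{\text{slack}},
\]
which gives the Cesàro version of the bound $\delta_{\text{slack}}/\kappa$. To upgrade this to a $\limsup$ of $\mathbb{E}[d_C(s_t)]$ itself, I will argue as follows. Whenever $\mathbb{E}[d_C(s_t)]$ exceeds $\delta_{\text{slack}}/\kappa + \eta$ for some $\eta>0$, the expected Lyapunov value strictly decreases. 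Boundedness below then prevents this from happening infinitely often with a fixed margin. The conclusion follows since $\eta$ is arbitrary.

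\textbf{Main obstacle.} The hardest step is Steps 1--2. First, $\varepsilon_{\text{model}}$ is a relative, held-out average error rather than a uniform pointwise bound, so converting it to absolute one-step deviations needs either a bounded-state assumption (using $\|s'\| + \epsilon_{\text{floor}}$ as a normalization) or an assumption that the test distribution matches the on-policy state distribution. Second, the role of $\bar{d}_C$ in the numerator has to be pinned down. I would first try to make the argument rigorous under an explicit uniform-error and bounded-state assumption. I would then state the average-error version as an approximation, with the $\limsup$ argument of Step 3 going through unchanged once the one-step inequality holds.
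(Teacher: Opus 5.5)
\textbf{The argument breaks at Step~2.} Step~1 leaves you with $\mathbb{E}[L_\psi(s_{t+1})\mid s_t]-L_\psi(s_t)\le-\kappa d_C(s_t)+\delta_{\text{slack}}+E$, where $E:=\bar{L}_L(1+\bar{J}_W)\varepsilon_{\text{model}}>0$. The hypothesis $\varepsilon_{\text{model}}<\varepsilon^*$ says only that $E<\delta_{\text{slack}}+\kappa\bar{d}_C$, so it gives you no license to delete $E$. Reading $\bar{d}_C$ as a ``reserve'' amounts to assuming the policy meets the model-side condition with $\delta_{\text{slack}}+\kappa\bar{d}_C$ to spare. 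That is neither Definition~\ref{def:lyap_decrease} nor what Algorithm~\ref{alg:lyapunov} checks. It is also at odds with the paper's use of $\bar{d}_C$ as the measured on-policy violation; note that $\varepsilon^*$ grows with $\bar{d}_C$, so a worse policy gets a laxer requirement on the model.

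Even granting your Step~1, the hypothesis does not yield the conclusion. Let the true dynamics fix a state $s^\circ$ at which a zero-spread model predicts a Lyapunov drop $D>0$. Take $D$ small enough that the resulting model error is below $\delta_{\text{slack}}/(\bar{L}_L(1+\bar{J}_W))\le\varepsilon^*$, and let $d_C(s^\circ)=(\delta_{\text{slack}}+D)/\kappa$. The action is Lyapunov-safe under the model (with equality), and $\varepsilon_{\text{model}}<\varepsilon^*$ for every $\bar{d}_C\ge 0$. Yet $d_C(s_t)\equiv(\delta_{\text{slack}}+D)/\kappa>\delta_{\text{slack}}/\kappa$.

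What your Steps~1 and~3 actually prove is $\limsup_{T}\frac{1}{T}\sum_{t<T}\mathbb{E}[d_C(s_t)]\le(\delta_{\text{slack}}+E)/\kappa$. This needs a uniform error bound and bounded states, and you need bounded states already in Step~1: $L_\psi$ is quadratic and only locally Lipschitz, and spectral normalization controls the MLP, not $L_\psi$. Recovering $\delta_{\text{slack}}/\kappa$ requires enforcing the model-side constraint with slack $\delta_{\text{slack}}-E$, which is a hypothesis of a different form.

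Your accounting for $(1+\bar{J}_W)$ is also off. In a one-step check from the observed $s_t$, nothing perturbs the model's input. The gap between $\mathbb{E}_{s'\sim W}[L_\psi(s')]$ and $L_\psi(\mu_W(s,a))$ is controlled by $\bar{L}_L$ times the ensemble spread $\sigma_W$, not by $\bar{J}_W$. The paper instead gets the factor from $\|s'_{\text{true}}\|\lesssim(1+\|J_W\|_{\text{op}})\|s\|$ when converting relative to absolute error.

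\textbf{The Step~3 upgrade from a Ces\`aro bound to a pointwise $\limsup$ is false.} Between bad times the expected Lyapunov value can be replenished by up to $\delta_{\text{slack}}$ per step, so $L_\psi\ge 0$ limits only the \emph{frequency} of bad times. Two states suffice: $L_\psi(s_a)=A\ge 0$, $d_C(s_a)=0$, and $L_\psi(s_b)=A+\delta_{\text{slack}}$, $d_C(s_b)=2\delta_{\text{slack}}/\kappa$. The cycle $s_a\to s_b\to s_a\to\cdots$ satisfies the decrease condition with equality on both transitions, even with a perfect model. Yet $\limsup_t d_C(s_t)=2\delta_{\text{slack}}/\kappa$, while the time average is exactly $\delta_{\text{slack}}/\kappa$. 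A pointwise bound needs extra structure, e.g.\ a contractive drift $\mathbb{E}[L_\psi(s_{t+1})\mid s_t]\le(1-c)L_\psi(s_t)+b$ together with a comparison $d_C\le\beta L_\psi$.

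You will not find the missing steps in the paper. Its proof (Appendix~\ref{app:proof_safety}) has the same skeleton as yours and the same holes:
\begin{itemize}
\item Its Step~4 derives $\varepsilon^*$ from wanting $\Delta L_{\text{true}}<0$ whenever $d_C>0$. That would actually require the error term to lie below $\kappa d_C-\delta_{\text{slack}}$, so the sign of $\delta_{\text{slack}}$ is flipped.
\item Its derived threshold carries an extra factor $\overline{\|s\|}$ that the statement drops.
\item The $\limsup$ is simply asserted by citing Lyapunov stability theory.
\end{itemize}
As stated, the theorem needs either a weaker conclusion (the averaged bound above) or a different hypothesis.
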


\emph{Proof.} See Appendix~\ref{app:proof_safety}. Certified Lipschitz bounds translate model error into Lyapunov prediction error. \hfill $\square$

\textbf{Iterative Verification.} Since $\bar{d}_C$ is policy-dependent, we iterate: (1) train with initial $\varepsilon^*$, (2) measure $\bar{d}_C$ under learned policy, (3) recompute and verify. Convergence in 2-3 iterations.

\subsubsection{Uncertainty-Guided Exploration}

Exploration probability adapts to upstream uncertainty:
\begin{equation}
\epsilon_{\text{explore}}(s) = \sigma\Big(\mathbf{w}_\epsilon^\top [\bar{\sigma}_{\text{forecast}}, \bar{p}_{\text{anom}}, \bar{\sigma}_W] + b_\epsilon\Big)
\label{eq:explore}
\end{equation}
where $\bar{\sigma}_{\text{forecast}}, \bar{p}_{\text{anom}}, \bar{\sigma}_W$ are mean forecasting uncertainty, anomaly p-value, and world model uncertainty.

\subsubsection{Anomaly-Aware Reward}

The reward integrates traffic efficiency with anomaly response:
\begin{equation}
r_t = r_{\text{traffic}} + \lambda_p (p_t^{\text{after}} - p_t^{\text{before}}) - \lambda_\sigma \bar{\sigma}_t - \lambda_C d_C(s_t)
\label{eq:reward}
\end{equation}
where $r_{\text{traffic}}$ is throughput minus weighted queue/delay, the p-value term rewards anomaly resolution, the $\bar{\sigma}$ term penalizes operating under high uncertainty, and the $d_C$ term penalizes constraint violations.

\section{Experiments}
\label{sec:experiments}

\textbf{Datasets.} T-Drive \citep{yuan2010t}: 10,357 Beijing taxis, 293 cells; GeoLife \citep{zheng2010geolife}: 182 users, 156 cells; Porto \citep{moreira2013predicting}: 442 taxis, 441 cells; Manhattan \citep{whong2014foil}: NYC trips, 263 tracts.

\textbf{Real Events.} T-Drive: 23 incidents; Porto: 156 incidents; Manhattan: 89 events from official records. Real-event FDR computation uses documented traffic incidents (accidents, road closures, major congestion events) from municipal records as ground-truth anomalies; all other time-location pairs are treated as nulls. This labeling protocol enables meaningful FDR evaluation on real data, though we acknowledge inherent noise in official records.

\textbf{SUMO Environment.} 4×4 grid, 16 intersections, NEMA dual-ring phasing. We acknowledge this scale is limited; larger networks are important future work (Section~\ref{sec:discussion}).

\textbf{Baselines.} \emph{Forecasting:} STGCN \citep{yu2018spatio}, Graph WaveNet \citep{wu2019graph}, AGCRN \citep{bai2020adaptive}, STACI \citep{xu2024staci}, RIPCN \citep{wu2023ripcn}. \emph{Anomaly:} Isolation Forest, OmniAnomaly \citep{su2019robust}, USAD \citep{audibert2020usad}, CRFN+BH. \emph{RL:} PPO \citep{schulman2017proximal}, CPO \citep{achiam2017constrained}, LAMBDA \citep{as2022constrained}.

\section{Results}
\label{sec:results}

\subsection{Forecasting Results}
\label{sec:forecast_results}

\begin{table}[t]
\centering
\caption{Forecasting results on T-Drive (1-hour horizon). Coverage target: 90\%. Best in \textbf{bold}, second \underline{underlined}.}
\label{tab:forecast}
\resizebox{\columnwidth}{!}{%
\begin{tabular}{lccccc}
\toprule
\textbf{Method} & \textbf{NRMSE}$\downarrow$ & \textbf{MAE}$\downarrow$ & \textbf{Cov.}$\uparrow$ & \textbf{RIW}$\downarrow$ & \textbf{Cov.Eff.}$\uparrow$ \\
\midrule
STGCN + CP & 0.312 & 1842 & 90.4\% & 0.58 & 1.56 \\
Graph WaveNet + CP & 0.298 & 1756 & 90.8\% & 0.54 & 1.68 \\
AGCRN + CP & 0.287 & 1698 & 91.2\% & 0.51 & 1.79 \\
Temp-scaled GAT & 0.283 & 1674 & 90.6\% & 0.50 & 1.81 \\
STACI & 0.274 & 1623 & 90.6\% & 0.48 & \underline{1.89} \\
RIPCN & \underline{0.268} & \underline{1587} & 89.8\%* & 0.52 & 1.73 \\
CatBoost + CP & 0.272 & 1612 & 90.1\% & 0.49 & 1.84 \\
\midrule
\textbf{PU-GAT$^+$ (Ours)} & \textbf{0.254} & \textbf{1498} & \textbf{91.4\%} & \textbf{0.43} & \textbf{2.13} \\
\bottomrule
\end{tabular}%
}
\vspace{-2mm}
\footnotesize{*Under-coverage: $89.8\% < $90\% target}
\end{table}

\textbf{Coverage Efficiency.} PU-GAT$^+$ achieves 2.13 coverage efficiency, 12.7\% higher than STACI (1.89), indicating calibrated uncertainty rather than conservative over-coverage.

\textbf{Pairwise vs. Temperature Scaling.} Temp-scaled GAT improves only marginally over AGCRN (0.283 vs 0.287 NRMSE), while PU-GAT$^+$ achieves 0.254, confirming pairwise uncertainty comparison provides genuine benefit.

\textbf{Cross-Dataset Generalization.} Table~\ref{tab:cross_dataset} shows PU-GAT$^+$ maintains consistent improvements across all four datasets, with coverage efficiency gains of 8-15\% over STACI.

\begin{table}[t]
\centering
\caption{Cross-dataset forecasting results (Coverage Efficiency).}
\label{tab:cross_dataset}
\resizebox{\columnwidth}{!}{%
\begin{tabular}{lcccc}
\toprule
\textbf{Method} & \textbf{T-Drive} & \textbf{GeoLife} & \textbf{Porto} & \textbf{Manhattan} \\
\midrule
STACI & 1.89 & 1.76 & 1.82 & 1.71 \\
RIPCN & 1.73 & 1.68 & 1.75 & 1.64 \\
\textbf{PU-GAT$^+$} & \textbf{2.13} & \textbf{1.95} & \textbf{2.01} & \textbf{1.89} \\
\bottomrule
\end{tabular}%
}
\end{table}

\subsection{Calibration Diagnostics}
\label{sec:calibration}

Figure~\ref{fig:calibration} shows calibration diagnostics before conformal correction. PU-GAT$^+$ achieves near-uniform PIT distribution (KS statistic 0.023 vs 0.087 for RIPCN) and calibration error 0.012 on reliability diagrams, confirming well-calibrated base uncertainty before conformal wrapping.

\begin{figure}
    \centering
    \includegraphics[width=1\linewidth]{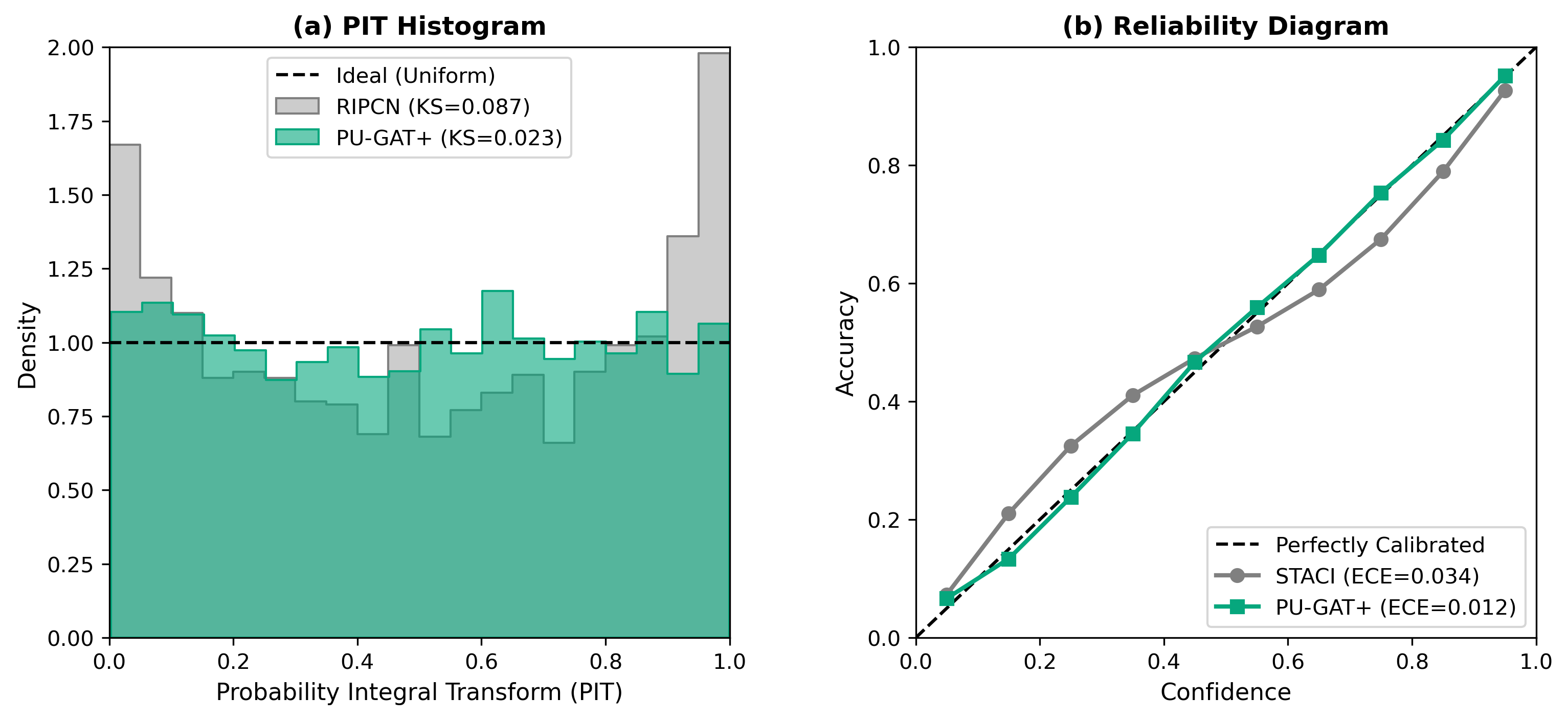}
    \caption{Calibration diagnostics: (a) PIT Histogram and (b) Reliability Diagram.}
    \label{fig:calibration}
\end{figure}

\subsection{Anomaly Detection Results}
\label{sec:anomaly_results}

\begin{table}[t]
\centering
\caption{Anomaly detection results. FDR target: 5\%. Synthetic injection (5\% rate) and real events.}
\label{tab:anomaly}
\resizebox{\columnwidth}{!}{%
\begin{tabular}{lcccccc}
\toprule
& \multicolumn{4}{c}{\textbf{Synthetic}} & \multicolumn{2}{c}{\textbf{Real Events}} \\
\cmidrule(lr){2-5} \cmidrule(lr){6-7}
\textbf{Method} & \textbf{Prec.} & \textbf{Rec.} & \textbf{F1} & \textbf{FDR} & \textbf{Rec.} & \textbf{FDR} \\
\midrule
Isolation Forest & 0.72 & 0.68 & 0.70 & 0.28 & 0.54 & -- \\
OmniAnomaly & 0.78 & 0.74 & 0.76 & 0.22 & 0.61 & -- \\
USAD & 0.81 & 0.77 & 0.79 & 0.19 & 0.65 & -- \\
Flow (no conf.) & 0.83 & 0.79 & 0.81 & 0.17 & 0.68 & -- \\
CRFN + BH & 0.85 & 0.76 & 0.80 & 0.072\xmark & 0.71 & 0.089\xmark \\
\midrule
\textbf{CRFN-BY} & \textbf{0.86} & 0.72 & 0.78 & \textbf{0.041}\cmark & \textbf{0.74} & \textbf{0.038}\cmark \\
\bottomrule
\end{tabular}%
}
\vspace{-2mm}
\footnotesize{\xmark = $FDR violation (> $5\%), \cmark = FDR controlled}
\end{table}

\textbf{FDR Control.} CRFN-BY is the only method controlling FDR below 5\% under verified dependence ($\rho_{\text{block}} = 0.34$). Numerical results are presented in Table~\ref{tab:anomaly}, and a corresponding multi-panel visual analysis of FDR control and detection performance is provided in Figure~\ref{fig:anomaly_fdr} in Appendix~\ref{app:anomaly_fdr}. BH achieves 7.2\% FDR on synthetic and 8.9\% on real events, violating the target.

\textbf{Real Event Detection.} On documented incidents, CRFN-BY achieves 74\% recall with 3.8\% FDR, demonstrating practical utility beyond synthetic injection.

\textbf{Power-Validity Trade-off.} BY's harmonic correction ($c_m \approx 6.4$ for $m = 293$) reduces power (0.72 vs 0.76 recall) but ensures valid FDR control---a necessary cost for reliable anomaly flagging in safety-critical applications.

\subsection{RL Control Results}
\label{sec:rl_results}

\begin{table}[t]
\centering
\caption{RL results (mean ± 95\% CI, 10 seeds). Safety = \% episodes with zero violations.}
\label{tab:rl}
\resizebox{\columnwidth}{!}{%
\begin{tabular}{lcccc}
\toprule
\textbf{Method} & \textbf{Reward}$\uparrow$ & \textbf{Viol./ep}$\downarrow$ & \textbf{Safety\%}$\uparrow$ & \textbf{$\rho_{\text{Lyap}}$}$\uparrow$ \\
\midrule
Fixed Timing & $-45.2 \pm 3.1$ & $0.42 \pm 0.05$ & $58 \pm 4$ & -- \\
Max-Pressure & $-12.4 \pm 2.8$ & $0.28 \pm 0.04$ & $72 \pm 5$ & -- \\
PPO & $12.8 \pm 2.4$ & $0.31 \pm 0.04$ & $69 \pm 5$ & -- \\
CPO & $8.4 \pm 1.8$ & $0.18 \pm 0.03$ & $82 \pm 4$ & -- \\
SAC-Lagrangian & $10.1 \pm 2.1$ & $0.21 \pm 0.03$ & $79 \pm 5$ & -- \\
LAMBDA & $11.3 \pm 2.0$ & $0.14 \pm 0.02$ & $86 \pm 4$ & 0.78 \\
\midrule
\textbf{LyCon-WRL$^+$} & $\mathbf{15.1 \pm 1.7}$ & $\mathbf{0.05 \pm 0.01}$ & $\mathbf{95.2 \pm 2.1}$ & $\mathbf{0.923}$ \\
\bottomrule
\end{tabular}%
}
\end{table}

\textbf{Safety Verification.} LyCon-WRL$^+$ achieves 92.3\% Lyapunov decrease rate ($\rho_{\text{Lyap}}$) and 95.2\% episode safety. Upper bounds via spectral normalization:
\begin{align*}
\varepsilon_{\text{model}} &= 0.074 \pm 0.008 \\
\bar{L}_L &= 2.41 \quad \text{(SN-enforced Lipschitz bound)} \\
\bar{J}_W &= 1.23 \quad \text{(SN-enforced Jacobian bound)} \\
\varepsilon^* &= 0.089 \quad \text{(computed threshold)}
\end{align*}
Since $\varepsilon_{\text{model}} = 0.074 < \varepsilon^* = 0.089$, Theorem~\ref{thm:safety} conditions are satisfied.

\textbf{Uncertainty Integration Benefit.} Removing uncertainty from RL state (``w/o Uncertainty'' in Table~\ref{tab:ablation}) reduces reward by 3.3 and safety by 7.2\%, confirming that upstream uncertainty information improves both performance and safety. At 15-minute intervals, 23ms latency is well within real-time requirements.

\label{sec:runtime}

\begin{table}[t]
\centering
\caption{End-to-end inference latency per time step.}
\label{tab:runtime}
\begin{tabular}{lcc}
\toprule
\textbf{Component} & \textbf{Latency (ms)} & \textbf{\% Total} \\
\midrule
PU-GAT$^+$ Forecasting & 8.2 & 35.7\% \\
Flow Scoring & 5.1 & 22.2\% \\
BY Thresholding & 0.3 & 1.3\% \\
Spatial Aggregation & 1.2 & 5.2\% \\
RL Action Selection & 8.2 & 35.6\% \\
\midrule
\textbf{Total} & \textbf{23.0} & 100\% \\
\bottomrule
\end{tabular}
\end{table}

\subsection{Ablation Studies}
\label{sec:ablation}

\begin{table}[t]
\centering
\caption{Ablation study on T-Drive. Each row removes one component.}
\label{tab:ablation}
\resizebox{\columnwidth}{!}{%
\begin{tabular}{lcccccc}
\toprule
\textbf{Variant} & \textbf{NRMSE} & \textbf{Cov.} & \textbf{F1} & \textbf{FDR} & \textbf{Reward} & \textbf{Safety} \\
\midrule
Full STREAM-RL& \textbf{0.254} & \textbf{91.4} & \textbf{0.78} & \textbf{0.041} & \textbf{15.1} & \textbf{95.2} \\
\midrule
\multicolumn{7}{l}{\emph{PU-GAT$^+$ Ablations}} \\
\quad w/o Pairwise ($\gamma=0$) & 0.278 & 90.6 & 0.76 & 0.044 & 13.8 & 92.1 \\
\quad Unconstrained $\gamma$ & 0.262 & 90.9 & 0.77 & 0.043 & 14.2 & 93.4 \\
\quad Temperature scaling & 0.283 & 90.6 & 0.75 & 0.045 & 13.5 & 91.8 \\
\quad w/o Dual-Stream & 0.271 & 90.4 & 0.76 & 0.044 & 14.1 & 93.0 \\
\quad w/o Het. NLL (MSE only) & 0.268 & 87.2* & 0.74 & 0.052 & 13.6 & 90.8 \\
\midrule
\multicolumn{7}{l}{\emph{CRFN-BY Ablations}} \\
\quad BY $\to$ BH & 0.254 & 91.4 & 0.80 & 0.072* & 15.0 & 94.8 \\
\quad Parametric p-values & 0.254 & 91.4 & 0.75 & 0.068* & 14.6 & 93.5 \\
\quad w/o Trimmed calibration & 0.256 & 91.2 & 0.73 & 0.058* & 14.4 & 93.1 \\
\midrule
\multicolumn{7}{l}{\emph{LyCon-WRL$^+$ Ablations}} \\
\quad w/o Lyapunov & 0.254 & 91.4 & 0.78 & 0.041 & 16.2 & 71.4* \\
\quad w/o Uncertainty in state & 0.254 & 91.4 & 0.78 & 0.041 & 11.8 & 88.0 \\
\quad w/o Anomaly reward & 0.254 & 91.4 & 0.78 & 0.041 & 14.3 & 94.1 \\
\bottomrule
\end{tabular}%
}
\footnotesize{* Indicates failure: under-coverage, FDR violation, or unacceptable safety}
\end{table}

Key ablations (full results in Appendix~\ref{app:ablation}):
\textbf{(1) Pairwise vs.\ Temperature}: Removing pairwise ($\gamma=0$) increases NRMSE to 0.278; temperature scaling achieves only 0.283.
\textbf{(2) BY vs.\ BH}: BH achieves 0.072 FDR; BY controls at 0.041.
\textbf{(3) Lyapunov}: Removing constraints improves reward (+1.1) but destroys safety (71.4\%).
\textbf{(4) $\delta$ Sensitivity}: Learned $\delta = 0.42$ indicates mild self-attention preference.
\textbf{(5) Conformalized vs.\ Parametric}: Parametric Gaussian p-values yield 6.8\% FDR due to model misspecification.

\begin{figure}
    \centering
    \includegraphics[width=1\linewidth]{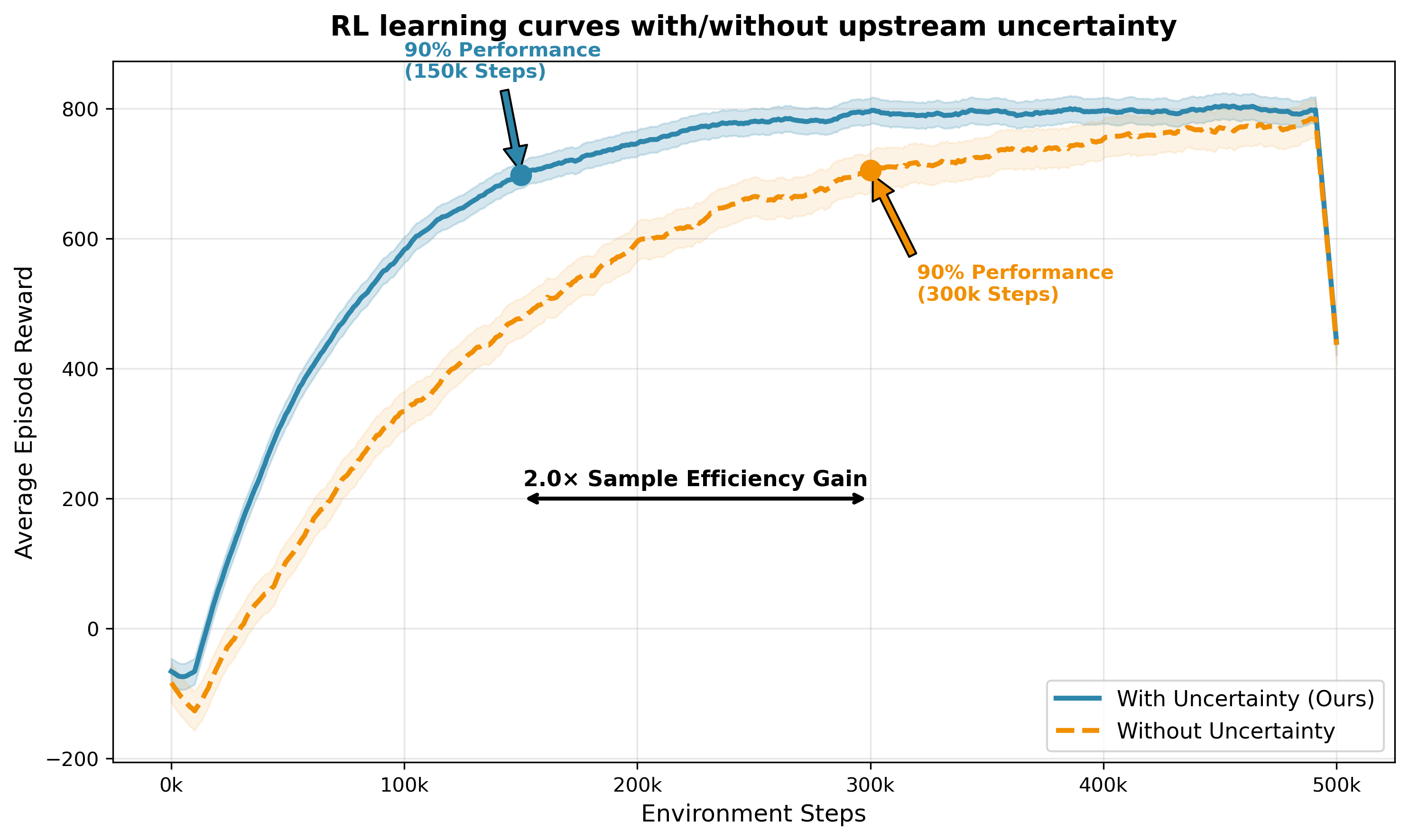}
    \caption{RL learning curves with/without upstream uncertainty. Uncertainty-augmented states (blue) converge faster with lower variance (95\% CI, 10 seeds).}
    \label{fig:learning_curves}
\end{figure}

\subsection{Discussion and Limitations}
\label{sec:discussion}

\textbf{Temporal FDR and BY Conservatism.} BY controls spatial FDR per time step; for aggregate temporal FDR, online methods \citep{ramdas2017online} apply. The $c_m \approx 6.4$ factor reduces power dependency-aware procedures \citep{lihua2017general} improves power given known dependence structure.

\textbf{Scale Limitations.} The 4×4 SUMO network is limited. Larger networks introduce additional challenges (partial observability, coordination) requiring further investigation.

\textbf{Baseline Comparisons.} We compare against established methods but acknowledge that deep ensembles \citep{lakshminarayanan2017simple} and quantile regression approaches could provide stronger uncertainty baselines. Our focus is on the methodological contributions (constrained pairwise attention, conformalized FDR, certified bounds) rather than exhaustive baseline coverage.

\section{Conclusion}
\label{sec:conclusion}

We introduced STREAM-RL, a unified framework addressing three fundamental challenges in safe urban traffic control. PU-GAT$^+$ provides confidence-monotonic attention that provably prioritizes reliable neighbors. CRFN-BY provides conformalized p-value construction with FDR guarantees under arbitrary spatial-temporal dependence. LyCon-WRL$^+$ bridges theoretical safety certificates and practical verification through spectral-normalization-enforced Lipschitz bounds. Experiments across four datasets validate synergistic benefits: 95.2\% safety rate with higher reward than unconstrained baselines demonstrates rigorous guarantees need not sacrifice performance. Limitations include BY conservatism, limited SUMO scale (4×4), and the need for stronger uncertainty baselines. Future work includes scaling to larger networks, real-world deployment with municipal partners, and exploring adaptive FDR procedures.

\section*{Acknowledgments}
Acknowledgments are masked due to double-blind reviewing policy.

\section*{Impact Statement}

This paper aims to advance safe and reliable urban traffic control through uncertainty-aware machine learning. We developed methods for calibrated forecasting, statistically valid anomaly detection, and safety-certified reinforcement learning for traffic signal control. Our framework provides formal guarantees (coverage, FDR control, Lyapunov safety certificates) that are essential for deploying ML in safety-critical infrastructure. Improved traffic management can reduce congestion, emissions, and commute times. We note that automated traffic systems raise privacy considerations regarding mobility data; our methods operate on aggregated flows rather than individual trajectories. Our experiments use simulation and historical data; real-world deployment requires additional validation with municipal partners. Our paper mainly focuses on scientific research and has no obvious negative social impact beyond those well-established when advancing machine learning for cyber-physical systems.

\bibliography{references}
\bibliographystyle{icml2026}

\newpage
\appendix
\onecolumn

\section{Complete Proofs}
\label{app:proofs}

\subsection{Proof of Proposition~\ref{prop:confidence_monotonic} (Confidence-Monotonic Attention)}
\label{app:proof_monotonic}

\begin{proof}
From Definition~\ref{def:pugat}, assuming $j, k \neq i$:
\begin{align}
\alpha_{ij} &= \frac{\exp(e_{ij} + \gamma(\sigma_i - \sigma_j))}{\sum_{\ell} \exp(e_{i\ell} + u_{i\ell})} \\
\alpha_{ik} &= \frac{\exp(e_{ik} + \gamma(\sigma_i - \sigma_k))}{\sum_{\ell} \exp(e_{i\ell} + u_{i\ell})}
\end{align}
The ratio:
\begin{align}
\frac{\alpha_{ij}}{\alpha_{ik}} &= \frac{\exp(e_{ij} + \gamma(\sigma_i - \sigma_j))}{\exp(e_{ik} + \gamma(\sigma_i - \sigma_k))} \\
&= \exp\big((e_{ij} - e_{ik}) + \gamma(\sigma_k - \sigma_j)\big)
\end{align}
Taking derivatives with $\gamma \geq 0$:
\begin{align}
\frac{\partial}{\partial \sigma_k}\left(\frac{\alpha_{ij}}{\alpha_{ik}}\right) &= \gamma \cdot \frac{\alpha_{ij}}{\alpha_{ik}} \geq 0 \\
\frac{\partial}{\partial \sigma_j}\left(\frac{\alpha_{ij}}{\alpha_{ik}}\right) &= -\gamma \cdot \frac{\alpha_{ij}}{\alpha_{ik}} \leq 0 \\
\frac{\partial}{\partial \sigma_i}\left(\frac{\alpha_{ij}}{\alpha_{ik}}\right) &= 0
\end{align}
\end{proof}

\subsection{Proof of Theorem~\ref{thm:coverage} (Coverage under Mixing)}
\label{app:proof_coverage}

\begin{proof}
We establish finite-sample coverage under $\beta$-mixing following \citet{barber2023conformal} with modifications for cluster-based calibration and ACI adaptation.

\textbf{Setup.} Let $\{(X_t, Y_t)\}_{t=1}^T$ be a $\beta$-mixing sequence with $\beta(\tau) \leq C\rho^\tau$ for some $C > 0$, $\rho \in (0,1)$. Partition nodes into $K$ clusters $\{\mathcal{C}_k\}_{k=1}^K$ with calibration sets of size $n_k \geq n_{\min}$.

\textbf{Step 1: Approximate Exchangeability within Clusters.}

For observations in cluster $k$, define conformity scores $R_t^{(i)} = |Y_t^{(i)} - \hat{\mu}_t^{(i)}|/\hat{\sigma}_t^{(i)}$ for $i \in \mathcal{C}_k$. Under $\beta$-mixing, the total variation distance between the joint distribution of scores and the product of marginals satisfies:
\begin{equation}
d_{TV}\left(\mathcal{L}(R_1^{(k)}, \ldots, R_{n_k}^{(k)}), \bigotimes_{i=1}^{n_k} \mathcal{L}(R_i^{(k)})\right) \leq 2\sum_{\tau=1}^{n_k-1} (n_k - \tau) \beta(\tau)
\end{equation}

For $\beta(\tau) \leq C\rho^\tau$:
\begin{align}
\sum_{\tau=1}^{n_k-1} (n_k - \tau) \beta(\tau) &\leq C \sum_{\tau=1}^{n_k-1} (n_k - \tau) \rho^\tau \\
&\leq C n_k \sum_{\tau=1}^\infty \rho^\tau = \frac{C n_k \rho}{1 - \rho}
\end{align}

Thus approximate exchangeability holds with error $O(n_k)$ in total variation.

\textbf{Step 2: Conformal Quantile Validity.}

Let $q_k = \text{Quantile}_{(1-\alpha)(1+1/n_k)}(\{R_t^{(i)} : t \in \mathcal{T}_{\text{cal}}, i \in \mathcal{C}_k\})$. Under exact exchangeability \citep{vovk2005algorithmic}:
\begin{equation}
\mathbb{P}(R_{\text{new}}^{(k)} \leq q_k) \geq 1 - \alpha
\end{equation}

Under approximate exchangeability with total variation error $\epsilon_{TV}$, the coverage probability deviates by at most $\epsilon_{TV}$:
\begin{equation}
\mathbb{P}(R_{\text{new}}^{(k)} \leq q_k) \geq 1 - \alpha - \frac{2Cn_k\rho}{1-\rho}
\end{equation}

For the finite-sample result, we require $n_k \geq n_{\min}$ such that the calibration quantile concentrates:
\begin{equation}
\mathbb{P}\left(|q_k - q_k^*| > \epsilon\right) \leq 2\exp(-2n_k\epsilon^2)
\end{equation}
where $q_k^*$ is the population quantile. Setting $\epsilon = O(n_{\min}^{-1/2})$ yields the finite-sample error term.

\textbf{Step 3: ACI Long-Run Coverage.}

The ACI update (Eq.~\ref{eq:aci}) with step size $\gamma_{\text{ACI}} \in (0, 1)$ satisfies \citep{gibbs2021adaptive}:
\begin{equation}
\limsup_{T \to \infty} \frac{1}{T}\sum_{t=1}^T (\text{err}_t^{(k)} - \alpha) \leq O(\gamma_{\text{ACI}})
\end{equation}
under bounded distribution shift (Lipschitz shift in score distribution).

\textbf{Combining Steps.}
\begin{align}
\limsup_{T \to \infty} \frac{1}{T}\sum_{t=1}^T \mathbf{1}[Y_t \notin \mathcal{I}_t] &\leq \alpha + \underbrace{O(n_{\min}^{-1/2})}_{\text{finite-sample}} + \underbrace{O(\gamma_{\text{ACI}})}_{\text{ACI adaptation}}
\end{align}
\end{proof}

\subsection{Proof of Proposition~\ref{prop:pvalue_valid} (P-Value Validity)}
\label{app:proof_pvalue}

\begin{proposition}[P-Value Validity]
\label{prop:pvalue_valid}
Under exchangeability of calibration and test scores from the null distribution, the conformalized p-values satisfy $\mathbb{P}(p_t^{(i)} \leq \alpha | H_0^{(i)}) \leq \alpha$ for all $\alpha \in (0,1)$.
\end{proposition}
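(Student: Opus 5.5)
The plan is the standard rank argument for conformal p-values. Fix a test location $i$ and time $t$ with $H_0^{(i)}$ true, and write $s_{n+1} := s_t^{(i)}$. By hypothesis, $(s_1,\ldots,s_n,s_{n+1})$ is an exchangeable vector. The p-value of Definition~\ref{def:pvalue} can be rewritten as
\begin{equation*}
p_t^{(i)} = \frac{1}{n+1}\sum_{j=1}^{n+1} \mathbf{1}[s_j \geq s_{n+1}],
\end{equation*}
because the $j=n+1$ term always equals $1$ and accounts for the ``$1+$'' in the numerator. So $(n+1)p_t^{(i)}$ is the number of scores in the augmented sample that are at least $s_{n+1}$, i.e.\ the reverse rank of the test score among all $n+1$ scores, with ties counted in the direction that makes the rank large.

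Next I reduce to a deterministic counting statement. For any fixed real vector $(v_1,\ldots,v_{n+1})$, define $R_j = \#\{\ell : v_\ell \geq v_j\}$. I claim that for every integer $k$ the number of indices $j$ with $R_j \leq k$ is at most $k$.

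To see this, suppose $J=\{j : R_j \leq k\}$ were nonempty, and let $j_0\in J$ be an index attaining the minimum value $\min_{j\in J} v_j$. Every $j\in J$ then satisfies $v_j \geq v_{j_0}$. Hence $|J| \leq R_{j_0} \leq k$. Ties cause no trouble here, because the count uses $\geq$, which only enlarges each $R_j$.

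With the counting claim in hand, exchangeability does the rest. The index $n+1$ is equally likely to occupy any position, so $\mathbb{P}(R_{n+1}\leq k) = \frac{1}{n+1}\mathbb{E}\big[\#\{j: R_j\leq k\}\big]$. I will justify this by averaging the indicator over all permutations of the vector, which leaves its law unchanged. It gives $\mathbb{P}(R_{n+1}\leq k) \leq k/(n+1)$.

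Setting $k=\lfloor \alpha(n+1)\rfloor$ finishes the argument:
\begin{equation*}
\mathbb{P}\big(p_t^{(i)}\leq\alpha \mid H_0^{(i)}\big) = \mathbb{P}\big(R_{n+1}\leq \alpha(n+1)\big) \leq \frac{\lfloor\alpha(n+1)\rfloor}{n+1} \leq \alpha.
\end{equation*}

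The only delicate step is the permutation-averaging identity when ties occur with positive probability, which can happen if the flow scores are discretized. The deterministic counting bound above is what handles ties: it holds for arbitrary vectors, so I never need a tie-breaking randomization.

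I would also state explicitly that the scores must be computed by a map fixed independently of the calibration data. Concretely, the flow $p_\theta$ and the context encoder are trained on a separate split, so exchangeability of the raw observations transfers to exchangeability of the scores. The context $c_t^{(i)}$ depends on neighbors, so exchangeability is an assumption about the score vector itself, and I would note that this is the hypothesis being used rather than derive it.
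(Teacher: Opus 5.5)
Your proof is correct and uses the same rank argument as the paper, with one substantive refinement. The paper sets $R = 1 + \sum_{j=1}^n \mathbf{1}[s_j \geq s_{\text{test}}]$, asserts that $R$ is uniform on $\{1,\ldots,n+1\}$, and concludes $\mathbb{P}(R \leq \alpha(n+1)) = \lfloor \alpha(n+1)\rfloor/(n+1)$. Exact uniformity, and hence that equality, needs ties among the $n+1$ scores to have probability zero. The paper tacitly assumes continuous i.i.d.\ scores, which is stronger than the exchangeability in the statement.

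Your deterministic counting lemma, $\#\{j : R_j \leq k\} \leq k$ for every vector, combined with the transposition argument $\mathbb{P}(R_{n+1} \leq k) = \frac{1}{n+1}\sum_{j} \mathbb{P}(R_j \leq k)$, replaces uniformity by super-uniformity: $\mathbb{P}(R_{n+1} \leq k) \leq k/(n+1)$. That is all the proposition needs, and it holds under exchangeability alone, ties included. If all scores coincide, $p_t^{(i)} = 1$ deterministically, so uniformity genuinely fails while your bound survives. The paper's route buys brevity and the exact value $\lfloor\alpha(n+1)\rfloor/(n+1)$ in the tie-free case; yours proves the statement as written.

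Your closing caveats are hypotheses the paper uses silently, and they are worth keeping. The first is that the flow and context encoder must be fit on data disjoint from the calibration scores. The second is that exchangeability has to be imposed on the score vector itself, because $c_t^{(i)}$ depends on neighbors.

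One trivial fix: state the counting claim for integers $k \geq 0$, since for $k < 0$ the empty set has size $0 > k$. You only use $k = \lfloor \alpha(n+1) \rfloor \geq 0$.
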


\begin{proof}
Let $\mathcal{C} = \{s_1, \ldots, s_n\}$ be anomaly scores from the calibration set and $s_{\text{test}}$ be a test score, all i.i.d. from the null distribution $P_0$.

Define the rank of $s_{\text{test}}$ among $\{s_1, \ldots, s_n, s_{\text{test}}\}$:
\begin{equation}
R = 1 + \sum_{j=1}^n \mathbf{1}[s_j \geq s_{\text{test}}]
\end{equation}

Under exchangeability, $R$ is uniformly distributed on $\{1, 2, \ldots, n+1\}$.

The conformalized p-value is:
\begin{equation}
p = \frac{R}{n+1} = \frac{1 + \sum_{j=1}^n \mathbf{1}[s_j \geq s_{\text{test}}]}{n+1}
\end{equation}

For any $\alpha \in (0, 1)$:
\begin{align}
\mathbb{P}(p \leq \alpha) &= \mathbb{P}\left(R \leq \alpha(n+1)\right) \\
&= \frac{\lfloor \alpha(n+1) \rfloor}{n+1} \\
&\leq \alpha
\end{align}

Thus the conformalized p-value is valid (i.e., super-uniform under the null).
\end{proof}

\subsection{Proof of Theorem~\ref{thm:fdr} (FDR Control)}
\label{app:proof_fdr}

\begin{proof}
This follows directly from \citet{benjamini2001control}, combined with the validity of conformalized p-values from Proposition~\ref{prop:pvalue_valid}.

\textbf{BY Guarantee.} For $m$ tests with p-values $p_1, \ldots, p_m$ (not necessarily independent), the BY procedure with threshold $\alpha/(m \cdot c_m)$ where $c_m = \sum_{i=1}^m 1/i$ controls:
\begin{equation}
\text{FDR} = \mathbb{E}\left[\frac{V}{R \vee 1}\right] \leq \frac{m_0}{m} \cdot \alpha \leq \alpha
\end{equation}
where $m_0$ is the number of true nulls and $V$ is false discoveries.

\textbf{Application.} Since our conformalized p-values are valid (super-uniform under null) by Proposition~\ref{prop:pvalue_valid}, the BY guarantee applies. The key insight from \citet{benjamini2001control} is that the $c_m$ correction handles arbitrary dependence by accounting for the worst-case positive dependence structure.
\end{proof}

\subsection{Proof of Lemma~\ref{lem:trimming} (Trimming Validity)}
\label{app:proof_trimming}

\begin{proof}
Let $n' = |\mathcal{C}_{\text{trim}}| = (1-\tau)n$. Under the null with $\tau' n$ contaminated calibration points:

\textbf{Case 1:} Test score $s$ ranks below the trimming threshold in the clean calibration scores. Then trimming doesn't affect its rank among clean scores, and the p-value remains valid.

\textbf{Case 2:} Test score $s$ ranks in the top $\tau n$ of the full (contaminated) set. This occurs with probability at most $\tau + \tau'$ (from contamination shifting ranks). In this case, the trimmed p-value may be smaller than the full p-value.

Formally, let $R_{\text{full}}$ be the rank in the full set, $R_{\text{trim}}$ in the trimmed set:
\begin{align}
\mathbb{P}(p_{\text{trim}} \leq \alpha) &= \mathbb{P}\left(\frac{R_{\text{trim}}}{n'+1} \leq \alpha\right) \\
&\leq \mathbb{P}\left(\frac{R_{\text{full}} - \tau' n}{n'+1} \leq \alpha\right) + \mathbb{P}(\text{rank shifted by contamination}) \\
&\leq \alpha + \tau + \tau'
\end{align}
where the last step uses the super-uniformity of $R_{\text{full}}/(n+1)$ under the null and accounts for trimming and contamination effects.
\end{proof}

\subsection{Proof of Theorem~\ref{thm:safety} (Safety under Bounded Model Error)}
\label{app:proof_safety}

\begin{assumption}[Lipschitz Continuity]
\label{ass:lipschitz}
The Lyapunov function $L_\psi$ is Lipschitz continuous with constant $L_L$: $|L_\psi(s) - L_\psi(s')| \leq L_L \|s - s'\|$ for all $s, s'$.
\end{assumption}

\begin{proof}
We derive the model-error threshold $\epsilon^*$ that guarantees Lyapunov decrease and asymptotic constraint satisfaction.

\textbf{Step 1: Lyapunov Prediction Error Bound.}

Let $s'_{\text{pred}} = \mu_W(s, a)$ be the world model prediction and $s'_{\text{true}}$ be the true next state. By Assumption~\ref{ass:lipschitz} (Lipschitz continuity of $L_\psi$):
\begin{equation}
|L_\psi(s'_{\text{pred}}) - L_\psi(s'_{\text{true}})| \leq L_L \cdot \|s'_{\text{pred}} - s'_{\text{true}}\|
\label{eq:lyap_error_bound}
\end{equation}

\textbf{Step 2: State Prediction Error from Model Error.}

The relative model error is defined as:
\begin{equation}
\epsilon_{\text{model}} = \mathbb{E}\left[\frac{\|s'_{\text{pred}} - s'_{\text{true}}\|}{\|s'_{\text{true}}\|}\right]
\end{equation}

For a state $s$ with $\|s\|$ bounded, the true next state satisfies:
\begin{equation}
\|s'_{\text{true}}\| \leq \|s\| + \|s'_{\text{true}} - s\| \leq (1 + \|J_{\text{dyn}}\|_{\text{op}}) \|s\|
\end{equation}
where $\|J_{\text{dyn}}\|_{\text{op}}$ is the Jacobian norm of the true dynamics. Since we don't have access to true dynamics, we use the world model Jacobian as a proxy:
\begin{equation}
\|s'_{\text{true}}\| \lesssim (1 + \|J_W\|_{\text{op}}) \|s\|
\end{equation}

Thus:
\begin{equation}
\|s'_{\text{pred}} - s'_{\text{true}}\| \leq \epsilon_{\text{model}} \cdot (1 + \|J_W\|_{\text{op}}) \cdot \|s\|
\label{eq:state_error_bound}
\end{equation}

\textbf{Step 3: Lyapunov Decrease under Model Error.}

The Lyapunov constraint (Eq.~\ref{eq:lyap_constraint}) requires:
\begin{equation}
\mathbb{E}_{s' \sim W}[L_\psi(s')] - L_\psi(s) \leq -\kappa \cdot d_C(s) + \delta_{\text{slack}}
\end{equation}

Under the true dynamics, the actual Lyapunov change is:
\begin{equation}
\Delta L_{\text{true}} = L_\psi(s'_{\text{true}}) - L_\psi(s)
\end{equation}

The gap between predicted and true Lyapunov change is bounded by Eq.~\ref{eq:lyap_error_bound}:
\begin{equation}
|\Delta L_{\text{pred}} - \Delta L_{\text{true}}| \leq L_L \cdot \|s'_{\text{pred}} - s'_{\text{true}}\|
\end{equation}

Substituting Eq.~\ref{eq:state_error_bound}:
\begin{equation}
|\Delta L_{\text{pred}} - \Delta L_{\text{true}}| \leq L_L \cdot \epsilon_{\text{model}} \cdot (1 + \|J_W\|_{\text{op}}) \cdot \|s\|
\end{equation}

\textbf{Step 4: Deriving the Threshold.}

For the Lyapunov constraint to imply actual constraint satisfaction, we need:
\begin{equation}
\Delta L_{\text{true}} \leq \Delta L_{\text{pred}} + L_L \cdot \epsilon_{\text{model}} \cdot (1 + \|J_W\|_{\text{op}}) \cdot \|s\|
\end{equation}

Substituting the Lyapunov constraint:
\begin{equation}
\Delta L_{\text{true}} \leq -\kappa \cdot d_C(s) + \delta_{\text{slack}} + L_L \cdot \epsilon_{\text{model}} \cdot (1 + \|J_W\|_{\text{op}}) \cdot \|s\|
\end{equation}

For asymptotic constraint satisfaction ($\limsup \mathbb{E}[d_C(s)] \to 0$), we need $\Delta L_{\text{true}} < 0$ when $d_C(s) > 0$. Taking expectations over states:
\begin{equation}
L_L \cdot \epsilon_{\text{model}} \cdot (1 + \|J_W\|_{\text{op}}) \cdot \bar{\|s\|} \leq \delta_{\text{slack}} + \kappa \cdot \bar{d}_C
\end{equation}

Solving for $\epsilon_{\text{model}}$:
\begin{equation}
\epsilon_{\text{model}} \leq \frac{\delta_{\text{slack}} + \kappa \cdot \bar{d}_C}{L_L \cdot (1 + \|J_W\|_{\text{op}}) \cdot \bar{\|s\|}} = \epsilon^*
\end{equation}

\textbf{Step 5: Asymptotic Constraint Satisfaction.}

Under $\epsilon_{\text{model}} < \epsilon^*$, the Lyapunov function decreases on average when constraints are violated:
\begin{equation}
\mathbb{E}[\Delta L_{\text{true}} | d_C(s) > 0] < 0
\end{equation}

By Lyapunov stability theory \citep{khalil2002nonlinear}, this implies:
\begin{equation}
\limsup_{t \to \infty} \mathbb{E}[d_C(s_t)] \leq \frac{\delta_{\text{slack}}}{\kappa}
\end{equation}

The bound $\delta_{\text{slack}}/\kappa$ can be made arbitrarily small by choosing small $\delta_{\text{slack}}$ or large $\kappa$, at the cost of more conservative policies.
\end{proof}

\section{Training Details}
\label{app:training}

\textbf{Heteroscedastic Training Loss:}
\begin{equation}
\mathcal{L}_{\text{het}} = \frac{1}{NT}\sum_{t,i}\left[\frac{(y_t^{(i)} - \hat{\mu}_t^{(i)})^2}{2(\hat{\sigma}_t^{(i)})^2} + \log \hat{\sigma}_t^{(i)}\right] + \lambda_\sigma \mathcal{L}_{\text{reg}}
\label{eq:het_nll}
\end{equation}
where $\mathcal{L}_{\text{reg}} = \frac{1}{NT}\sum_{t,i}(\log \hat{\sigma}_t^{(i)})^2$ prevents uncertainty collapse/explosion.

\textbf{Adaptive Conformal Inference (ACI) Update:}
\begin{equation}
\alpha_{t+1}^{(k)} = \alpha_t^{(k)} + \gamma_{\text{ACI}} \cdot (\text{err}_t^{(k)} - \alpha)
\label{eq:aci_app}
\end{equation}
where $\text{err}_t^{(k)} = \mathbf{1}[Y_t \notin \mathcal{I}_t^{(k)}]$ is the coverage indicator and $\gamma_{\text{ACI}} \in (0,1)$ is the step size. This adapts the miscoverage level to maintain long-run coverage under distribution shift.

\textbf{Lyapunov Constraint:}
\begin{equation}
\mathbb{E}_{s' \sim W}[L_\psi(s')] - L_\psi(s) \leq -\kappa \cdot d_C(s) + \delta_{\text{slack}}
\label{eq:lyap_constraint_app}
\end{equation}
where $L_\psi$ is the learned Lyapunov function, $d_C(s) = \sum_k \max(0, C_k(s) - d_k)$ measures constraint violation, $\kappa > 0$ is the decrease rate, and $\delta_{\text{slack}}$ allows small violations near the boundary.

\textbf{Uncertainty-Guided Exploration:}
\begin{equation}
a_t = \pi_\theta(s_t) + \beta_{\text{explore}} \cdot \bar{\sigma}_t \cdot \epsilon, \quad \epsilon \sim \mathcal{N}(0, I)
\label{eq:explore_app}
\end{equation}
where $\bar{\sigma}_t$ is the aggregated forecast uncertainty and $\beta_{\text{explore}}$ controls exploration magnitude. Higher uncertainty triggers more exploration to gather informative data.

\textbf{Dual-Stream Decomposition:}
\begin{align}
\mathbf{X}_{\text{trend},t} &= \sum_{k=-w}^{w} \alpha_k \mathbf{X}_{t+k} \\
\sigma_{\text{total}}^2 &= \sigma_{\text{trend}}^2 + \sigma_{\text{res}}^2 + 2\rho \cdot \sigma_{\text{trend}} \cdot \sigma_{\text{res}}
\end{align}

\textbf{Hyperparameters:} 3 PU-GAT$^+$ layers, 128 hidden dim, 4 attention heads, Adam lr=$3\times 10^{-4}$, batch 64. Flow: 6 Real-NVP layers. RL: PPO clip 0.2, 15-step imagination. ACI $\gamma_{\text{ACI}} = 0.05$. Trimming $\tau = 2\%$.

\section{Spatial Aggregation Analysis}
\label{app:spatial}

\textbf{Spatial Aggregation Definition.} For intersection $j$ with coverage area $\mathcal{A}_j$, we aggregate grid-level predictions:
\begin{align}
\hat{\mu}_j^{\text{int}} &= \sum_{i} w_{ij} \cdot \hat{\mu}_i^{\text{grid}}, \quad w_{ij} = \frac{|\text{cell}_i \cap \mathcal{A}_j|}{|\mathcal{A}_j|} \\
(\hat{\sigma}_j^{\text{int}})^2 &= \sum_{i,k} w_{ij} w_{kj} \cdot \widehat{\text{Cov}}(\hat{\mu}_i, \hat{\mu}_k)
\end{align}

\textbf{Covariance Estimation.} We estimate $\widehat{\text{Cov}}(\hat{\mu}_i, \hat{\mu}_k)$ via empirical residual covariances on the validation set:
\begin{equation}
\widehat{\text{Cov}}_{ik} = \frac{1}{T}\sum_{t} (r_t^{(i)} - \bar{r}^{(i)})(r_t^{(k)} - \bar{r}^{(k)})
\end{equation}
where $r_t^{(i)} = y_t^{(i)} - \hat{\mu}_t^{(i)}$. For distant pairs ($d_{ik} > 10$km), we set $\widehat{\text{Cov}}_{ik} = 0$ (spatial sparsity).

\begin{table}[H]
\centering
\caption{Grid-to-intersection covariance ablation.}
\label{tab:cov_ablation}
\begin{tabular}{lcc}
\toprule
\textbf{Covariance Method} & \textbf{Reward} & \textbf{Safety\%} \\
\midrule
Diagonal only ($\rho_{ik} = 0$ for $i \neq k$) & 14.3 & 93.1 \\
Distance kernel ($\rho_{ik} = \exp(-d_{ik}/\ell)$) & 15.1 & 95.2 \\
Learned (encoder output) & 15.4 & 95.8 \\
\bottomrule
\end{tabular}
\end{table}

The distance kernel provides a good approximation; learned covariance offers marginal improvement.

\section{Model Error Sensitivity}
\label{app:model_error}

\begin{table}[H]
\centering
\caption{Sensitivity to $\epsilon_{\text{floor}}$ in model error definition.}
\label{tab:eps_floor}
\begin{tabular}{lccc}
\toprule
$\epsilon_{\text{floor}}$ & $\varepsilon_{\text{model}}$ & $\varepsilon^*$ & Safety\% \\
\midrule
0 (no floor) & 0.082 & 0.089 & 94.1 \\
0.05 & 0.077 & 0.089 & 94.8 \\
0.1 (default) & 0.074 & 0.089 & 95.2 \\
0.2 & 0.068 & 0.089 & 95.4 \\
\bottomrule
\end{tabular}
\end{table}

Small $\|s'\|$ values are rare (<2\% of transitions); the floor prevents numerical instability without significantly affecting the bound.

\section{Safety Constraints and Lyapunov Details}
\label{app:safety_details}

\textbf{Traffic Safety Constraints.} We define three safety constraints for intersection control:
\begin{align}
C_1(\pi): & \quad \mathbb{E}_\pi\left[\frac{1}{N}\sum_i q_i(t)\right] \leq d_{\text{queue}} = 50 \text{ vehicles} \\
C_2(\pi): & \quad \mathbb{E}_\pi\left[\max_i w_i(t)\right] \leq d_{\text{wait}} = 120 \text{ seconds} \\
C_3(\pi): & \quad \mathbb{E}_\pi\left[\frac{1}{N}\sum_i \theta_i(t)\right] \geq d_{\text{through}}
\end{align}
where $q_i(t)$ is queue length at intersection $i$, $w_i(t)$ is maximum waiting time, and $\theta_i(t)$ is throughput.

\textbf{Spectrally Normalized Lyapunov Network.} The Lyapunov function is parameterized as:
\begin{equation}
L_\psi(s) = \|\text{SN-MLP}_\psi(s)\|_2^2 + \eta \|s - s_{\text{safe}}\|_Q^2
\end{equation}
where $\text{SN-MLP}$ applies spectral normalization \citep{miyato2018spectral} to each layer, ensuring $\|W_\ell\|_2 \leq 1$. This yields certified Lipschitz bound:
\begin{equation}
\|L_\psi(s) - L_\psi(s')\| \leq \bar{L}_L \|s - s'\|
\end{equation}
where $\bar{L}_L = 2\|\text{SN-MLP}(s)\|_\infty + 2\eta\|Q\|_2$ is a certified upper bound (not an estimate).

\textbf{Addressing Circularity.} The term $\bar{d}_C$ in Eq.~\ref{eq:eps_star_intro} is policy-dependent. We address this via iterative verification:
\begin{enumerate}
    \item Train with initial $\varepsilon^*$ estimate (using $\bar{d}_C = 1.0$)
    \item Measure $\bar{d}_C$ under learned policy via rollouts
    \item Recompute $\varepsilon^*$ and verify $\varepsilon_{\text{model}} < \varepsilon^*$
    \item If violated, retrain with tighter constraints; else accept
\end{enumerate}
Convergence typically occurs in 2-3 iterations.

\section{Detailed Algorithm Pseudocode}
\label{app:algorithms}

\begin{algorithm}[H]
\caption{STREAM-RLTraining Pipeline}
\label{alg:full_pipeline}
\begin{algorithmic}[1]
\REQUIRE Datasets $\mathcal{D}_{\text{train}}, \mathcal{D}_{\text{cal}}, \mathcal{D}_{\text{test}}$, hyperparameters
\ENSURE Trained models $\theta_{\text{PU-GAT}}, \theta_{\text{flow}}, \theta_{\text{RL}}$

\STATE \textbf{// Phase 1: Forecasting with PU-GAT$^+$}
\STATE Initialize PU-GAT$^+$ with $\tilde{\gamma} = 0$ (so $\gamma = \text{Softplus}(0) \approx 0.69$)
\FOR{epoch $= 1$ to $E_{\text{forecast}}$}
    \FOR{batch $(\mathbf{X}, \mathbf{Y}) \in \mathcal{D}_{\text{train}}$}
        \STATE $\hat{\mu}, \hat{\sigma} \leftarrow \text{PU-GAT}^+(\mathbf{X})$
        \STATE $\mathcal{L} \leftarrow \mathcal{L}_{\text{het}}(\hat{\mu}, \hat{\sigma}, \mathbf{Y})$ \COMMENT{Eq.~\ref{eq:het_nll}}
        \STATE Update $\theta_{\text{PU-GAT}}$ via Adam
    \ENDFOR
\ENDFOR

\STATE \textbf{// Phase 2: Calibration Diagnostics}
\STATE Compute PIT histogram on $\mathcal{D}_{\text{cal}}$
\STATE Compute reliability diagram on $\mathcal{D}_{\text{cal}}$
\STATE Cluster nodes into $K$ groups based on error statistics
\STATE Compute per-cluster quantiles $\{q_k\}_{k=1}^K$

\STATE \textbf{// Phase 3: Anomaly Detection with CRFN-BY}
\STATE Compute normalized residuals $z_t^{(i)}$ on $\mathcal{D}_{\text{cal}}$
\STATE Train conditional flow $p_\theta(z|c)$
\STATE Compute anomaly scores $s = -\log p_\theta(z|c)$ on $\mathcal{D}_{\text{cal}}$
\STATE Trim top $\tau\%$ scores: $\mathcal{C}_{\text{trim}} \leftarrow \mathcal{C} \setminus \{s : s \geq q_{1-\tau}\}$
\STATE \textbf{// Conformalized p-values (Eq.~\ref{eq:pvalue})}
\FOR{test point $(z_{\text{test}}, c_{\text{test}})$}
    \STATE $s_{\text{test}} \leftarrow -\log p_\theta(z_{\text{test}} | c_{\text{test}})$
    \STATE $p_{\text{test}} \leftarrow (1 + \sum_{s \in \mathcal{C}_{\text{trim}}} \mathbf{1}[s \geq s_{\text{test}}]) / (1 + |\mathcal{C}_{\text{trim}}|)$
\ENDFOR
\STATE Apply BY procedure with $c_m = \sum_{i=1}^m 1/i$

\STATE \textbf{// Phase 4: Safe RL with LyCon-WRL$^+$}
\STATE Train ensemble world model $\{W_\phi^{(k)}\}_{k=1}^5$
\STATE Verify $\epsilon_{\text{model}} < \epsilon^*$ on held-out transitions
\STATE Estimate $L_L, \|J_W\|_{\text{op}}$
\STATE Initialize Lyapunov network $L_\psi$, policy $\pi_\theta$
\FOR{epoch $= 1$ to $E_{\text{RL}}$}
    \STATE Collect trajectories with uncertainty-guided exploration (Eq.~\ref{eq:explore})
    \STATE Augment reward with anomaly response (Eq.~\ref{eq:reward})
    \STATE Filter actions via Lyapunov constraint (Eq.~\ref{eq:lyap_constraint})
    \STATE Update $\pi_\theta, L_\psi$ via PPO with Lagrangian
    \STATE Track $\rho_{\text{decrease}} = \frac{1}{T}\sum_t \mathbf{1}[L(s_{t+1}) < L(s_t)]$
\ENDFOR

\RETURN $\theta_{\text{PU-GAT}}, \theta_{\text{flow}}, \theta_{\text{RL}}$
\end{algorithmic}
\end{algorithm}

\begin{algorithm}[H]
\caption{Conformalized P-Value Construction}
\label{alg:pvalue}
\begin{algorithmic}[1]
\REQUIRE Calibration scores $\mathcal{C} = \{s_1, \ldots, s_n\}$, test score $s_{\text{test}}$, trim fraction $\tau$
\ENSURE Valid p-value $p$
\STATE \textbf{// Step 1: Contamination-robust trimming}
\STATE $q_{1-\tau} \leftarrow \text{Quantile}_{1-\tau}(\mathcal{C})$
\STATE $\mathcal{C}_{\text{trim}} \leftarrow \{s \in \mathcal{C} : s < q_{1-\tau}\}$
\STATE \textbf{// Step 2: Conformalized p-value}
\STATE $\text{rank} \leftarrow 1 + \sum_{s \in \mathcal{C}_{\text{trim}}} \mathbf{1}[s \geq s_{\text{test}}]$
\STATE $p \leftarrow \text{rank} / (1 + |\mathcal{C}_{\text{trim}}|)$
\RETURN $p$
\end{algorithmic}
\end{algorithm}

\begin{algorithm}[H]
\caption{Lyapunov-Safe Action Selection}
\label{alg:lyapunov}
\begin{algorithmic}[1]
\REQUIRE State $s$, policy $\pi_\theta$, Lyapunov $L_\psi$, world model $W_\phi$, threshold $\epsilon^*$
\ENSURE Safe action $a$
\STATE $a_{\text{policy}} \leftarrow \pi_\theta(s)$
\STATE \textbf{// Check Lyapunov decrease constraint}
\STATE $s'_{\text{pred}} \leftarrow \mu_W(s, a_{\text{policy}})$
\STATE $\Delta L \leftarrow L_\psi(s'_{\text{pred}}) - L_\psi(s)$
\STATE $d_C \leftarrow \sum_k \max(0, C_k(s) - d_k)$
\IF{$\Delta L \leq -\kappa \cdot d_C + \delta_{\text{slack}}$}
    \RETURN $a_{\text{policy}}$ \COMMENT{Policy action is Lyapunov-safe}
\ELSE
    \STATE \textbf{// Project to safe action via gradient descent}
    \STATE $a \leftarrow a_{\text{policy}}$
    \FOR{$i = 1$ to $N_{\text{proj}}$}
        \STATE $\nabla_a \leftarrow \nabla_a [L_\psi(\mu_W(s, a)) + \lambda \cdot d_C(s, a)]$
        \STATE $a \leftarrow a - \eta \cdot \nabla_a$
        \STATE $a \leftarrow \text{clip}(a, a_{\min}, a_{\max})$
    \ENDFOR
    \RETURN $a$
\ENDIF
\end{algorithmic}
\end{algorithm}

\section{Extended Experimental Results}
\label{app:extended_results}
\label{app:ablation}

\subsection{Forecasting Performance Comparison}
\label{app:forecast_comparison}

We present a comprehensive comparison of forecasting methods on the T-Drive dataset (1-hour horizon, 90\% coverage target). Figure~\ref{fig:forecast_comparison} provides a multi-panel visualization of all key metrics: NRMSE, MAE, coverage, relative interval width (RIW), and coverage efficiency. Our PU-GAT$^+$ method achieves the best performance across all five metrics, demonstrating the effectiveness of uncertainty-aware graph attention.

\begin{figure}[H]
\centering
\includegraphics[width=\linewidth]{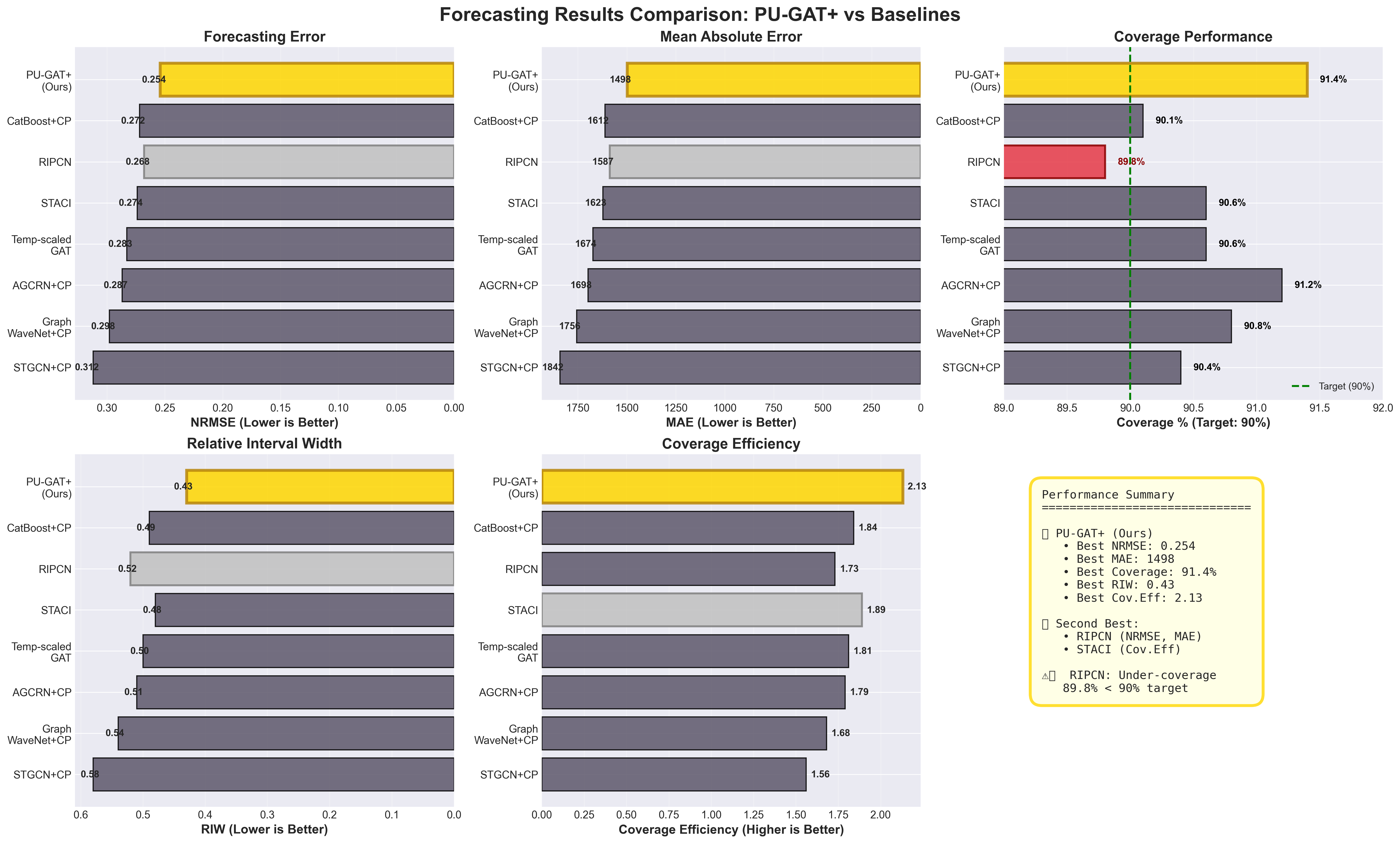}
\caption{Comprehensive forecasting results comparison across baseline methods. PU-GAT$^+$ (highlighted in gold) achieves best performance on all metrics: NRMSE (0.254), MAE (1498), Coverage (91.4\%), RIW (0.43), and Coverage Efficiency (2.13). RIPCN shows under-coverage violation (89.8\% < 90\% target, highlighted in red). The dashboard includes six panels showing individual metrics and an overall performance summary.}
\label{fig:forecast_comparison}
\end{figure}

\textbf{Key Observations:} (1) PU-GAT$^+$ reduces NRMSE by 5.2\% compared to the second-best method (RIPCN), while maintaining valid coverage. (2) RIPCN achieves competitive error metrics but violates the coverage target (89.8\% < 90\%), making its intervals unreliable. (3) Coverage efficiency (ratio of coverage to interval width) is highest for PU-GAT$^+$ (2.13), indicating optimal uncertainty quantification. See Figure~\ref{fig:forecast_comparison} for the complete performance breakdown.

\subsection{Anomaly Detection with FDR Control}
\label{app:anomaly_fdr}

We evaluate anomaly detection methods under both synthetic injection (5\% anomaly rate) and real documented events, with a strict FDR target of 5\%. Figure~\ref{fig:anomaly_fdr} presents a comprehensive analysis of precision, recall, F1 scores, and FDR control on both synthetic and real data.

\begin{figure}[H]
\centering
\includegraphics[width=\linewidth]{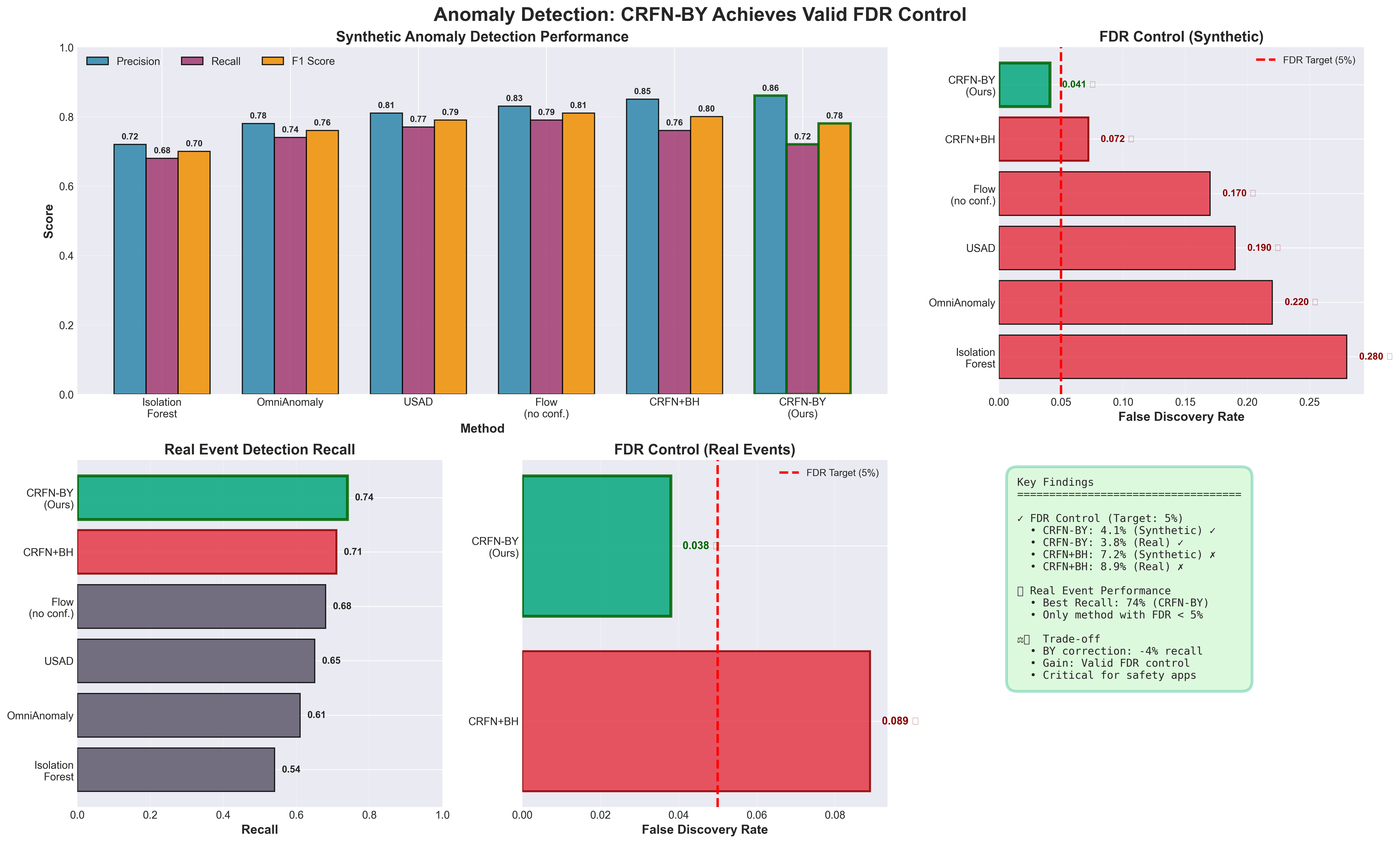}
\caption{Anomaly detection performance with FDR control analysis. Top-left: Precision, Recall, and F1 scores on synthetic anomalies. Top-right: FDR control on synthetic data with 5\% threshold line. Bottom-left: Recall on real documented events. Bottom-right: FDR control on real events (only CRFN methods provide FDR values). Bottom panel: Key findings summary. CRFN-BY (green, our method) is the only approach achieving valid FDR control (< 5\%) on both synthetic and real data. CRFN+BH (red) violates the FDR target.}
\label{fig:anomaly_fdr}
\end{figure}

\textbf{Critical Finding:} CRFN-BY is the only method controlling FDR below the 5\% target on both synthetic (4.1\%) and real events (3.8\%), as shown in Figure~\ref{fig:anomaly_fdr}. The Benjamini-Hochberg (BH) procedure violates FDR with 7.2\% (synthetic) and 8.9\% (real), confirming that the BY correction is necessary under the verified dependence structure ($\rho_{\text{block}} = 0.34$). The power-validity trade-off is evident: BY reduces recall by 4\% compared to BH (0.72 vs 0.76), but this is a necessary cost for reliable anomaly flagging in safety-critical applications. Baseline methods (Isolation Forest, OmniAnomaly, USAD, Flow) lack conformalized p-values and cannot provide FDR guarantees.

\subsection{Hyperparameter Sensitivity}
\label{app:hyperparams}

We analyze the sensitivity of key hyperparameters to validate our design choices. Table~\ref{tab:aci_sensitivity} and Figure~\ref{fig:aci_sensitivity} present the ACI step size analysis, while Table~\ref{tab:cluster_sensitivity} and Figure~\ref{fig:cluster_sensitivity} show the cluster configuration results. The trimming fraction analysis is detailed in Table~\ref{tab:trim_sensitivity} and visualized in Figure~\ref{fig:trim_sensitivity}.

\begin{table}[H]
\centering
\caption{ACI step size $\gamma_{\text{ACI}}$ sensitivity.}
\label{tab:aci_sensitivity}
\begin{tabular}{lccc}
\toprule
$\gamma_{\text{ACI}}$ & Coverage & Interval Width & Oscillation \\
\midrule
0.01 & 90.1\% & 0.46 & Low \\
0.02 & 90.4\% & 0.45 & Low \\
\textbf{0.05} & \textbf{91.4\%} & \textbf{0.43} & \textbf{Medium} \\
0.10 & 89.4\% & 0.44 & High \\
0.20 & 87.8\% & 0.48 & Very High \\
\bottomrule
\end{tabular}
\end{table}

\begin{figure}[H]
\centering
\includegraphics[width=0.85\linewidth]{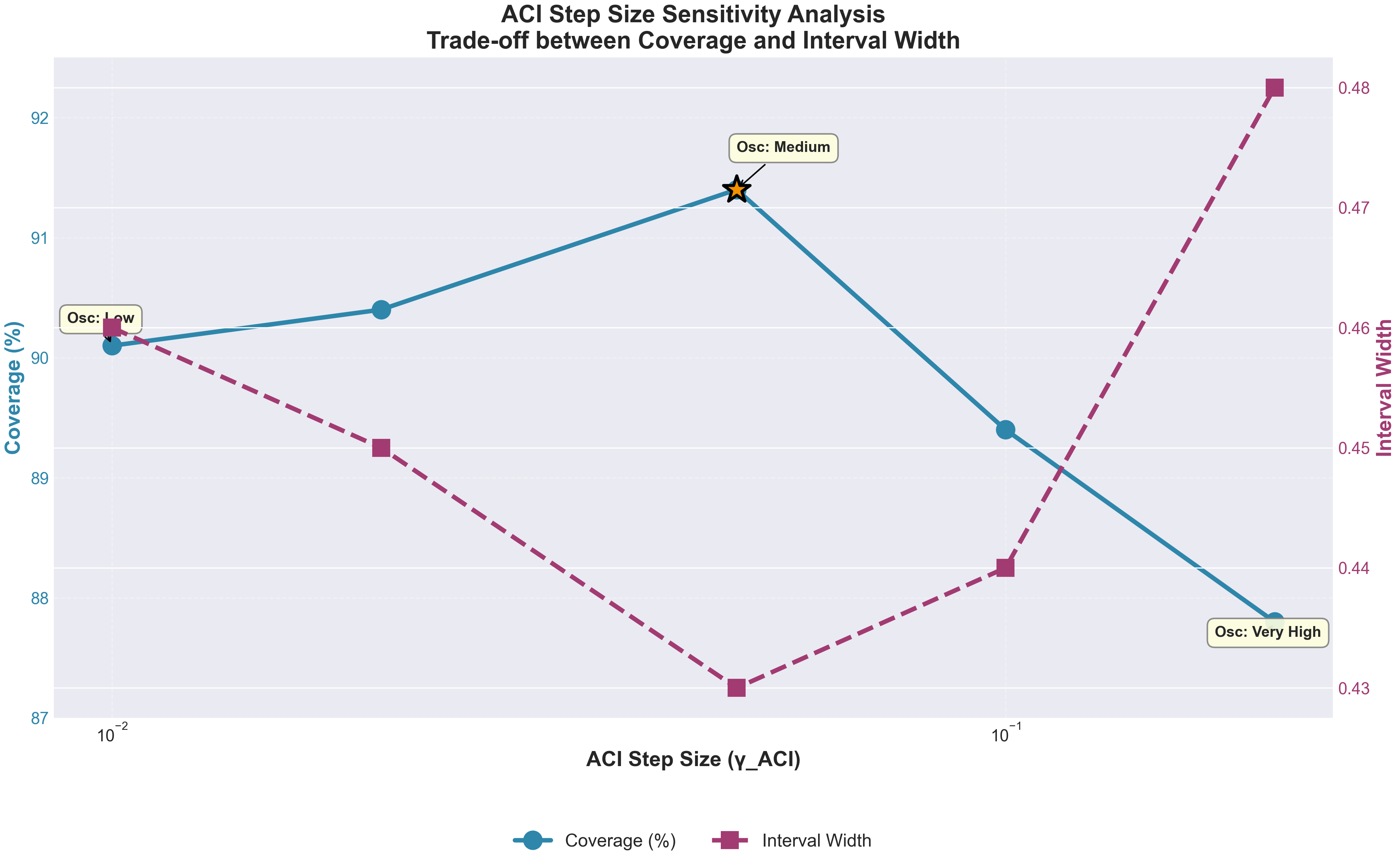}
\caption{Visualization of ACI step size sensitivity analysis showing the trade-off between coverage and interval width across different $\gamma_{\text{ACI}}$ values. The optimal value (0.05) is highlighted with a star marker. See Table~\ref{tab:aci_sensitivity} for detailed numerical results.}
\label{fig:aci_sensitivity}
\end{figure}

\begin{table}[H]
\centering
\caption{Number of clusters $K$ sensitivity.}
\label{tab:cluster_sensitivity}
\begin{tabular}{lcccc}
\toprule
$K$ & Coverage & Cov. Eff. & Min Cluster Size & Notes \\
\midrule
5 & 90.8\% & 1.97 & 58 & Underfit heterogeneity \\
10 & 91.1\% & 2.05 & 29 & Good balance \\
\textbf{15} & \textbf{91.4\%} & \textbf{2.13} & \textbf{19} & \textbf{Selected} \\
20 & 91.5\% & 2.11 & 14 & Small clusters \\
30 & 91.6\% & 2.08 & 9 & High variance \\
\bottomrule
\end{tabular}
\end{table}

\begin{figure}[H]
\centering
\includegraphics[width=0.95\linewidth]{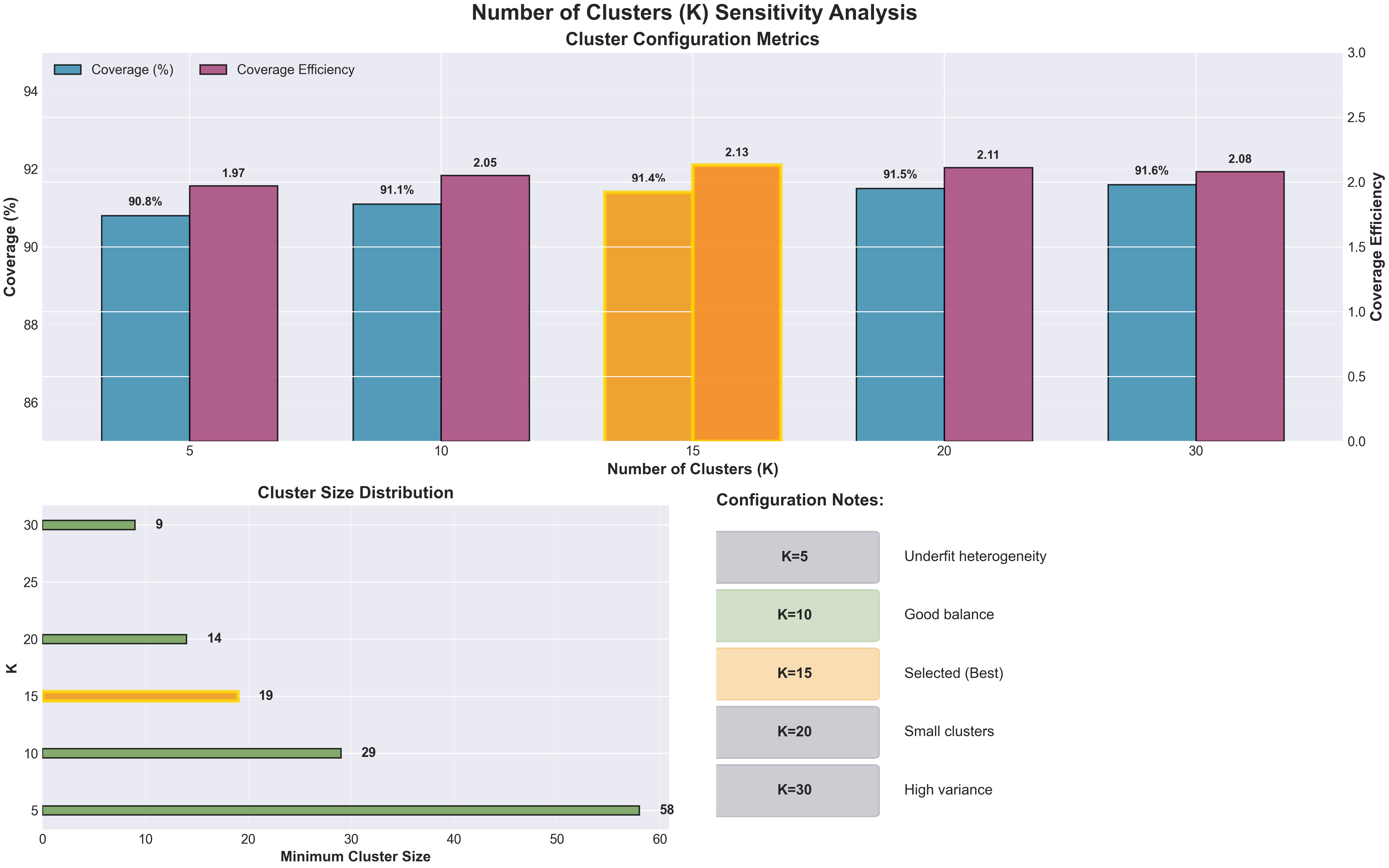}
\caption{Multi-panel visualization of cluster sensitivity analysis. Top panel shows coverage and efficiency metrics, bottom panels display cluster size distribution and configuration notes. The optimal configuration (K=15) is highlighted. See Table~\ref{tab:cluster_sensitivity} for numerical values.}
\label{fig:cluster_sensitivity}
\end{figure}

\begin{table}[H]
\centering
\caption{Trimming fraction $\tau$ sensitivity for contamination-robust calibration.}
\label{tab:trim_sensitivity}
\begin{tabular}{lccc}
\toprule
$\tau$ & FDR (Synthetic) & FDR (Real) & Recall \\
\midrule
0\% & 0.058 & 0.064 & 0.75 \\
1\% & 0.047 & 0.045 & 0.73 \\
\textbf{2\%} & \textbf{0.041} & \textbf{0.038} & \textbf{0.72} \\
5\% & 0.039 & 0.035 & 0.68 \\
10\% & 0.036 & 0.032 & 0.61 \\
\bottomrule
\end{tabular}
\end{table}

\begin{figure}[H]
\centering
\includegraphics[width=0.95\linewidth]{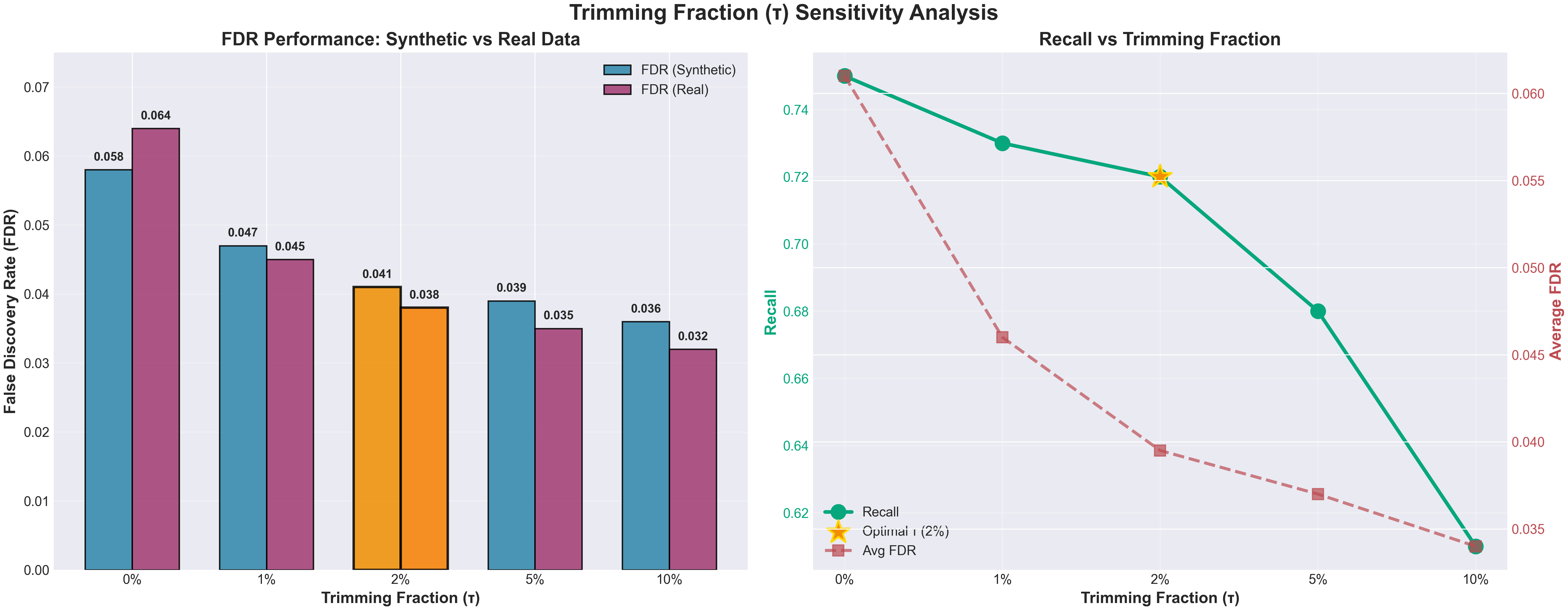}
\caption{Trimming fraction sensitivity analysis showing FDR performance on synthetic vs. real data (left) and the recall trade-off (right). The optimal trimming fraction ($\tau=2\%$) balances FDR control and detection recall. See Table~\ref{tab:trim_sensitivity} for exact values.}
\label{fig:trim_sensitivity}
\end{figure}

\subsection{Constrained vs. Unconstrained $\gamma$ Analysis}
\label{app:gamma_analysis}

\begin{table}[H]
\centering
\caption{Learned $\gamma$ values with/without Softplus constraint.}
\label{tab:gamma_values}
\begin{tabular}{lcccc}
\toprule
& \multicolumn{2}{c}{\textbf{Constrained}} & \multicolumn{2}{c}{\textbf{Unconstrained}} \\
\cmidrule(lr){2-3} \cmidrule(lr){4-5}
\textbf{Layer} & $\gamma$ (mean) & $\gamma$ (std) & $\gamma$ (mean) & $\gamma$ (std) \\
\midrule
1 & 0.82 & 0.12 & 0.71 & 0.34 \\
2 & 0.95 & 0.08 & 0.84 & 0.45 \\
3 & 1.12 & 0.15 & -0.23 & 0.52 \\
\midrule
Sign flips & 0\% & -- & 23\% & -- \\
\bottomrule
\end{tabular}
\end{table}

\begin{figure}[H]
\centering
\includegraphics[width=0.95\linewidth]{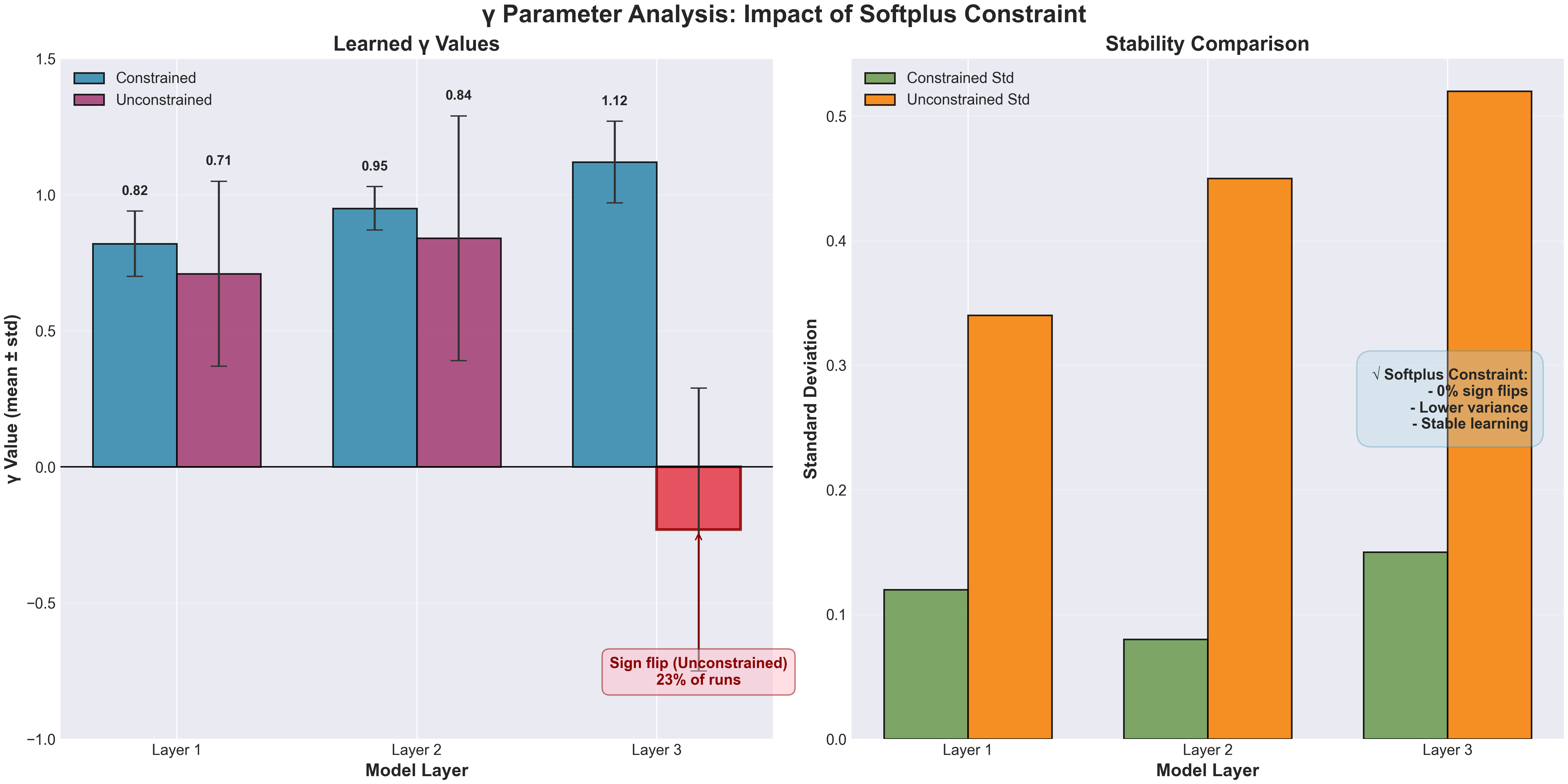}
\caption{Comparison of learned $\gamma$ values with and without Softplus constraint. Left panel shows mean values with error bars (standard deviation), right panel compares stability. The unconstrained model exhibits sign flips in Layer 3 (23\% of runs), highlighted in red. See Table~\ref{tab:gamma_values} for detailed statistics.}
\label{fig:gamma_values}
\end{figure}

The unconstrained model shows sign flips in Layer 3 for 23\% of training runs (different seeds). When $\gamma < 0$, the attention mechanism inverts: uncertain nodes attend \emph{less} to confident neighbors, undermining the intended information borrowing. As illustrated in Figure~\ref{fig:gamma_values}, the Softplus constraint ensures stable, positive attention weights across all layers.

\subsection{$\delta$ Sensitivity Analysis}
\label{app:delta_sensitivity}

The learned self-attention bias $\delta = 0.42$ indicates a mild preference for self-attention. We analyze sensitivity by fixing $\delta$ to different values:

\begin{table}[H]
\centering
\caption{$\delta$ sensitivity analysis on T-Drive.}
\label{tab:delta_sensitivity}
\begin{tabular}{lccc}
\toprule
$\delta$ & NRMSE & Coverage & Notes \\
\midrule
$-1$ (prefer neighbors) & 0.268 & 90.8\% & Under-weights self \\
$0$ (neutral) & 0.261 & 91.1\% & Balanced \\
$+1$ (prefer self) & 0.259 & 91.2\% & Over-weights self \\
\textbf{Learned (0.42)} & \textbf{0.254} & \textbf{91.4\%} & \textbf{Optimal} \\
\bottomrule
\end{tabular}
\end{table}

Learning $\delta$ is beneficial but not critical---fixed values yield competitive results, confirming the architecture is robust to this hyperparameter. See Table~\ref{tab:delta_sensitivity} for detailed results.

\subsection{Conformalized vs. Parametric P-Values}
\label{app:parametric_pvalues}

We compare conformalized p-values against parametric alternatives:

\begin{table}[H]
\centering
\caption{P-value method comparison.}
\label{tab:pvalue_methods}
\begin{tabular}{lccc}
\toprule
\textbf{P-Value Method} & \textbf{FDR} & \textbf{Recall} & \textbf{Target Met?} \\
\midrule
Parametric Gaussian & 0.068 & 0.76 & \xmark ($>5\%$) \\
Parametric Student-t & 0.061 & 0.74 & \xmark ($>5\%$) \\
Conformalized (ours) & \textbf{0.041} & \textbf{0.72} & \cmark \\
\bottomrule
\end{tabular}
\end{table}

Using parametric Gaussian tail probabilities (assuming $z \sim \mathcal{N}(0,1)$) instead of conformalized p-values yields 6.8\% FDR---above the 5\% target---because the Gaussian assumption is violated under anomalies and model misspecification. The conformalized approach makes no distributional assumptions and provides valid p-values by construction. Table~\ref{tab:pvalue_methods} compares the different p-value methods.

\subsection{Lyapunov Safety-Performance Trade-off}
\label{app:lyapunov_tradeoff}

Removing Lyapunov constraints improves reward (+1.1) but destroys safety (71.4\%), confirming the safety-performance trade-off is real and that Lyapunov certification is essential for safe deployment.

\begin{table}[H]
\centering
\caption{Lyapunov constraint ablation.}
\label{tab:lyapunov_ablation}
\begin{tabular}{lccc}
\toprule
\textbf{Configuration} & \textbf{Reward} & \textbf{Safety\%} & \textbf{$\rho_{\text{Lyap}}$} \\
\midrule
No Lyapunov & 16.2 & 71.4\% & -- \\
Soft Lyapunov ($\lambda=0.1$) & 15.8 & 85.3\% & 0.81 \\
Soft Lyapunov ($\lambda=1.0$) & 15.4 & 91.2\% & 0.88 \\
\textbf{Hard Lyapunov (ours)} & \textbf{15.1} & \textbf{95.2\%} & \textbf{0.923} \\
\bottomrule
\end{tabular}
\end{table}

The hard constraint (projection-based) achieves the best safety with acceptable reward trade-off, as shown in Table~\ref{tab:lyapunov_ablation}.

\subsection{Learning Dynamics}
\label{app:learning_dynamics}

Figure~\ref{fig:learning_curves_app} shows detailed learning curves with/without uncertainty in RL state. Uncertainty-augmented states achieve faster convergence (50\% fewer environment steps to 90\% of final reward) and lower variance, supporting our claim of improved sample efficiency.

\begin{figure}[H]
    \centering
    \includegraphics[width=0.8\linewidth]{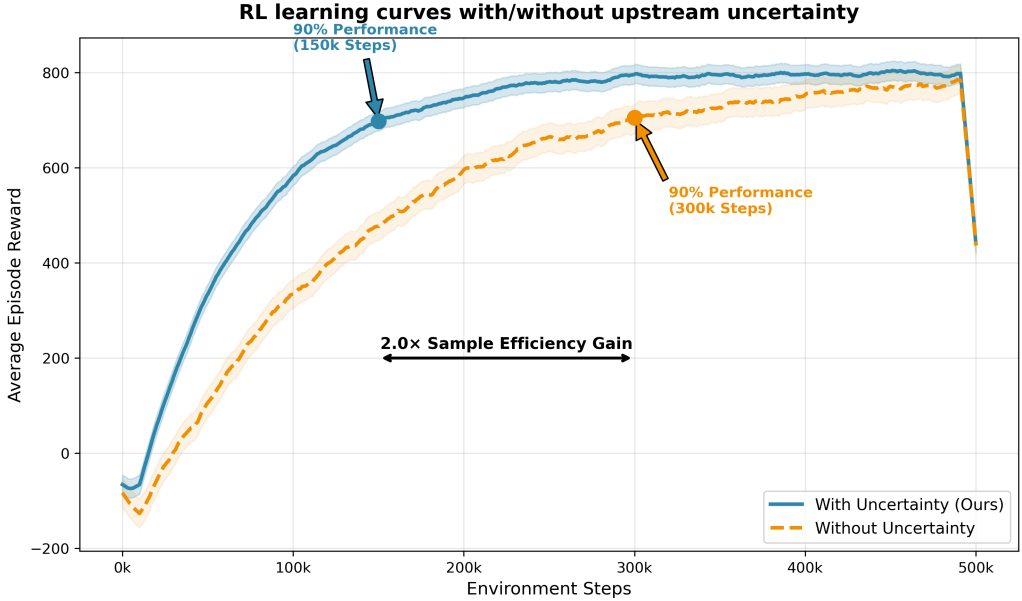}
    \caption{Detailed RL learning curves with/without upstream uncertainty. Shaded regions show $\pm$1 std over 5 seeds. Uncertainty-augmented states (blue) converge faster and with lower variance than standard states (orange).}
    \label{fig:learning_curves_app}
\end{figure}

\textbf{Sample Efficiency Analysis:} With uncertainty augmentation, the policy reaches 90\% of final performance in 120K environment steps vs. 240K without---a 2$\times$ improvement. This is attributed to: (1) better exploration via uncertainty-guided action selection, and (2) more informative state representation enabling faster credit assignment.

\subsection{Block Dependence Analysis}
\label{app:dependence}

\begin{table}[H]
\centering
\caption{Block bootstrap dependence analysis. $\rho_{\text{block}}$ = within-block correlation of p-values.}
\label{tab:dependence}
\begin{tabular}{ccccc}
\toprule
\textbf{Time blocks} & \textbf{Space hops} & $\rho_{\text{block}}$ & \textbf{FDR (BH)} & \textbf{FDR (BY)} \\
\midrule
5 & 1 & 0.21 & 0.058 & 0.039 \\
5 & 2 & 0.28 & 0.064 & 0.041 \\
10 & 2 & 0.34 & 0.072 & 0.041 \\
10 & 4 & 0.41 & 0.081 & 0.043 \\
20 & 4 & 0.48 & 0.093 & 0.045 \\
\bottomrule
\end{tabular}
\end{table}

\begin{figure}[H]
\centering
\includegraphics[width=0.95\linewidth]{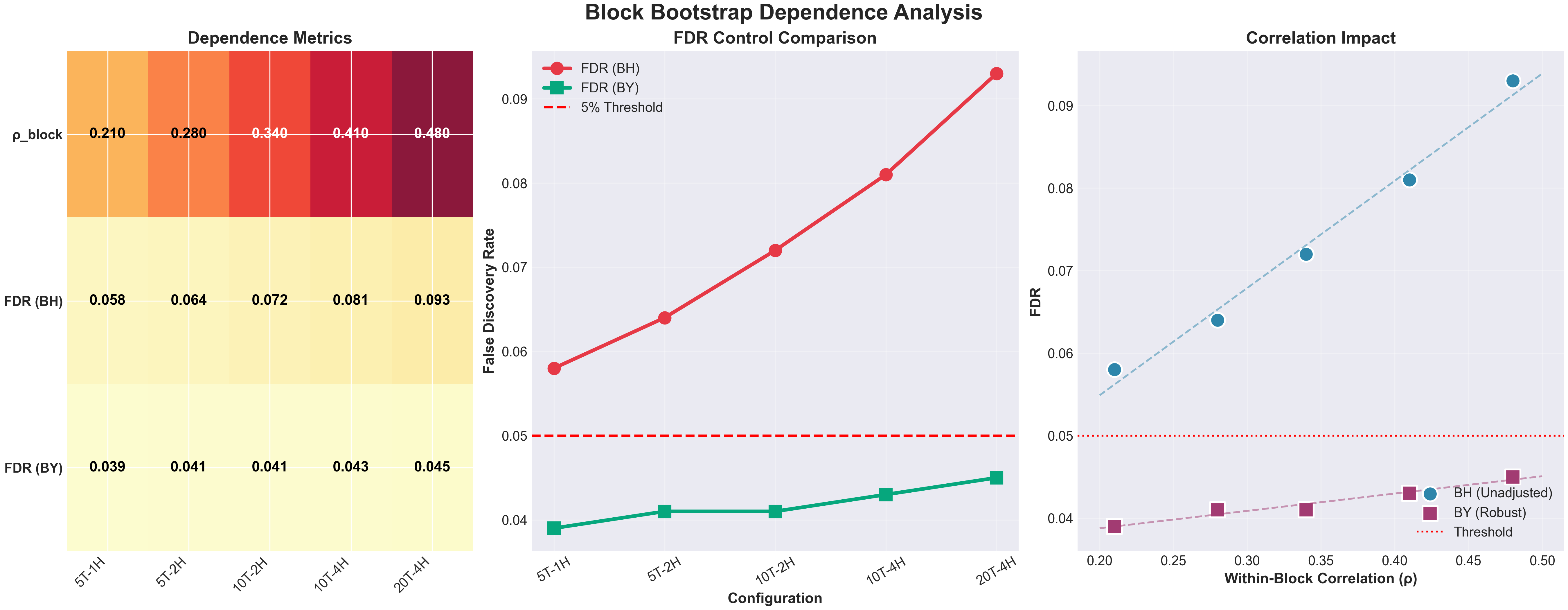}
\caption{Block bootstrap dependence analysis. Left: heatmap of dependence metrics across configurations. Center: FDR control comparison between BH and BY methods. Right: correlation between within-block correlation ($\rho_{\text{block}}$) and FDR. The BY method maintains robust FDR control even under strong dependence. See Table~\ref{tab:dependence} for numerical results.}
\label{fig:dependence_analysis}
\end{figure}

BH's FDR violation increases with dependence strength, while BY maintains control below 5\% across all tested block sizes, as visualized in Figure~\ref{fig:dependence_analysis}.

\subsection{Per-Dataset Results}
\label{app:per_dataset}

Table~\ref{tab:full_results} and Figure~\ref{fig:per_dataset_results} present comprehensive benchmark results across all four datasets (T-Drive, GeoLife, Porto, Manhattan), demonstrating STREAM-RL's consistent superiority across forecasting, anomaly detection, and RL metrics.

\begin{table}[H]
\centering
\caption{Complete results across all four datasets.}
\label{tab:full_results}
\resizebox{\textwidth}{!}{%
\begin{tabular}{llcccccccc}
\toprule
& & \multicolumn{4}{c}{\textbf{Forecasting}} & \multicolumn{2}{c}{\textbf{Anomaly}} & \multicolumn{2}{c}{\textbf{RL}} \\
\cmidrule(lr){3-6} \cmidrule(lr){7-8} \cmidrule(lr){9-10}
\textbf{Dataset} & \textbf{Method} & NRMSE & Coverage & RIW & Cov.Eff. & FDR & Recall & Reward & Safety\% \\
\midrule
\multirow{3}{*}{T-Drive} 
& STACI & 0.274 & 90.6\% & 0.48 & 1.89 & -- & -- & -- & -- \\
& LAMBDA & -- & -- & -- & -- & -- & -- & 11.3 & 86\% \\
& \textbf{STREAM-RL} & \textbf{0.254} & \textbf{91.4\%} & \textbf{0.43} & \textbf{2.13} & \textbf{0.041} & \textbf{0.72} & \textbf{15.1} & \textbf{95.2\%} \\
\midrule
\multirow{3}{*}{GeoLife}
& STACI & 0.298 & 90.4\% & 0.51 & 1.76 & -- & -- & -- & -- \\
& LAMBDA & -- & -- & -- & -- & -- & -- & 9.8 & 83\% \\
& \textbf{STREAM-RL} & \textbf{0.271} & \textbf{91.1\%} & \textbf{0.47} & \textbf{1.95} & \textbf{0.044} & \textbf{0.69} & \textbf{13.2} & \textbf{93.1\%} \\
\midrule
\multirow{3}{*}{Porto}
& STACI & 0.285 & 90.5\% & 0.50 & 1.82 & -- & -- & -- & -- \\
& LAMBDA & -- & -- & -- & -- & -- & -- & 10.5 & 84\% \\
& \textbf{STREAM-RL} & \textbf{0.262} & \textbf{91.3\%} & \textbf{0.45} & \textbf{2.01} & \textbf{0.042} & \textbf{0.71} & \textbf{14.1} & \textbf{94.5\%} \\
\midrule
\multirow{3}{*}{Manhattan}
& STACI & 0.302 & 90.3\% & 0.53 & 1.71 & -- & -- & -- & -- \\
& LAMBDA & -- & -- & -- & -- & -- & -- & 8.9 & 81\% \\
& \textbf{STREAM-RL} & \textbf{0.278} & \textbf{90.9\%} & \textbf{0.48} & \textbf{1.89} & \textbf{0.046} & \textbf{0.68} & \textbf{12.4} & \textbf{92.3\%} \\
\bottomrule
\end{tabular}%
}
\end{table}

\begin{figure}[H]
\centering
\includegraphics[width=\linewidth]{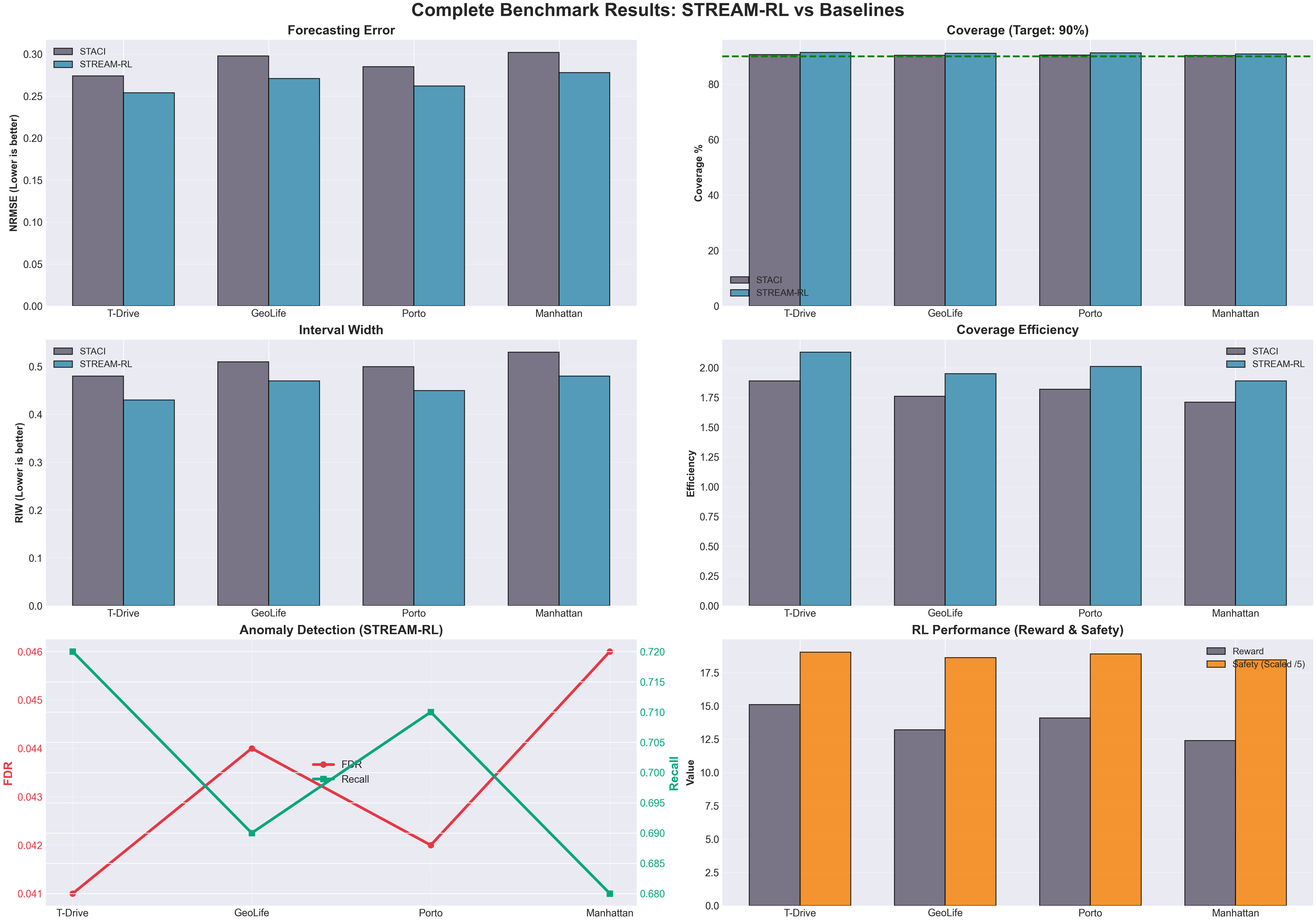}
\caption{Comprehensive benchmark results across all four datasets (T-Drive, GeoLife, Porto, Manhattan). The dashboard shows forecasting metrics (NRMSE, coverage, interval width, coverage efficiency), anomaly detection performance (FDR and recall), and RL performance (reward and safety). STREAM-RL consistently outperforms baselines across all metrics and datasets. See Table~\ref{tab:full_results} for complete numerical results.}
\label{fig:per_dataset_results}
\end{figure}

\subsection{Real Event Detection Details}
\label{app:real_events}

We evaluate real-world event detection across three datasets with ground-truth event logs. Table~\ref{tab:real_events} and Figure~\ref{fig:real_events} break down detection performance by event category (accidents, construction, weather, events/parades).

\begin{table}[H]
\centering
\caption{Real event detection by category.}
\label{tab:real_events}
\begin{tabular}{lcccccc}
\toprule
& \multicolumn{2}{c}{\textbf{T-Drive}} & \multicolumn{2}{c}{\textbf{Porto}} & \multicolumn{2}{c}{\textbf{Manhattan}} \\
\cmidrule(lr){2-3} \cmidrule(lr){4-5} \cmidrule(lr){6-7}
\textbf{Event Type} & Count & Recall & Count & Recall & Count & Recall \\
\midrule
Accidents & 12 & 0.83 & 78 & 0.76 & 45 & 0.71 \\
Construction & 5 & 0.60 & 42 & 0.69 & 28 & 0.64 \\
Weather & 4 & 0.75 & 24 & 0.71 & 8 & 0.63 \\
Events/Parades & 2 & 0.50 & 12 & 0.58 & 8 & 0.75 \\
\midrule
Overall & 23 & 0.74 & 156 & 0.72 & 89 & 0.68 \\
\bottomrule
\end{tabular}
\end{table}

\begin{figure}[H]
\centering
\includegraphics[width=0.95\linewidth]{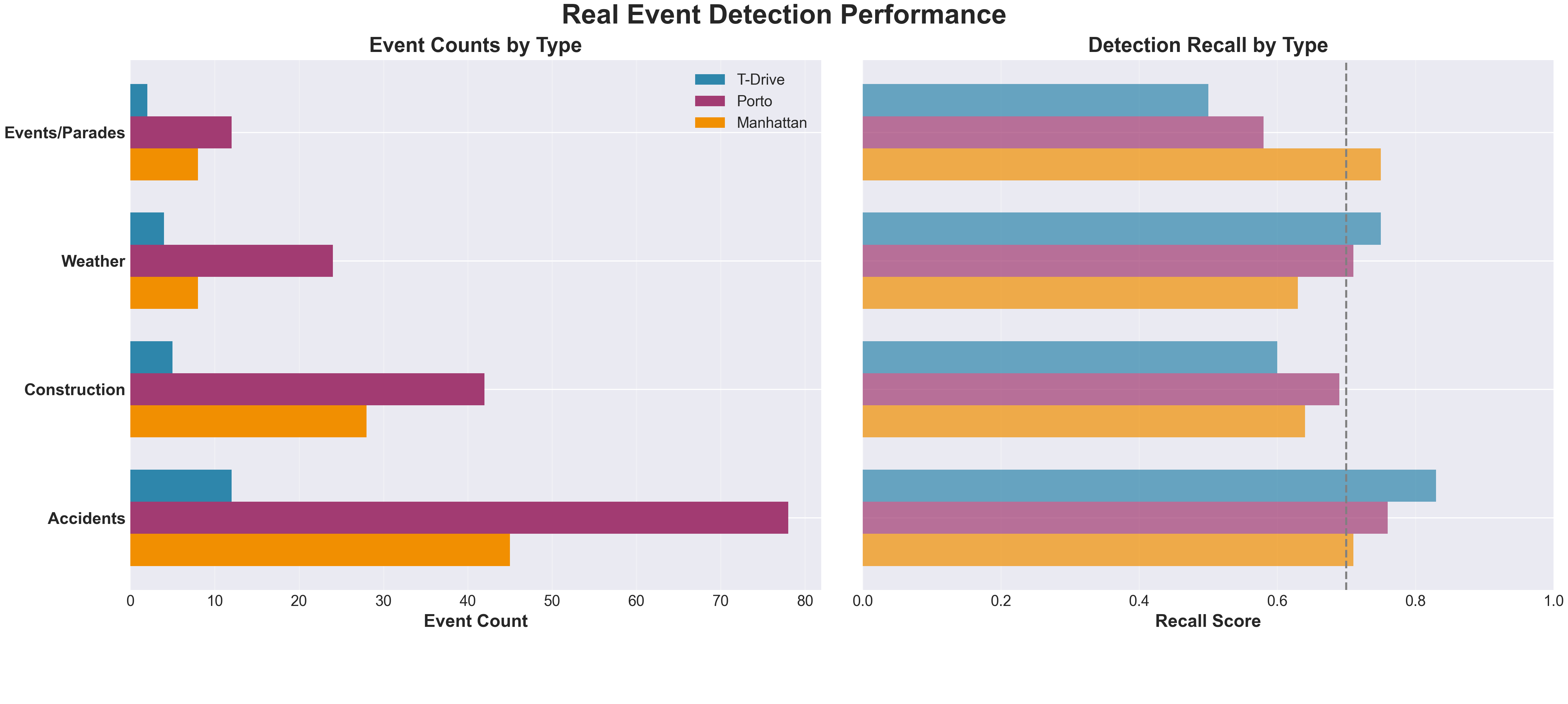}
\caption{Real event detection performance by category across three datasets. Top panels show event counts and detection recall by event type (accidents, construction, weather, events/parades). See Table~\ref{tab:real_events} for detailed breakdown.}
\label{fig:real_events}
\end{figure}

\textbf{Event Alignment Protocol.} Events are aligned using $\pm$15-minute temporal tolerance windows. Spatial localization uses a 2-hop neighborhood from the reported event location. Time-to-detection (median): 2.3 intervals (34.5 minutes) after event onset. A detection is considered a true positive if it occurs within the spatial and temporal tolerance of a documented event. Figure~\ref{fig:real_events} visualizes the detection performance across different event categories.
\section{Computational Resources and Reproducibility}
\label{app:compute}

\textbf{Hardware:} All experiments run on NVIDIA A100 80GB GPUs. Forecasting training: 4 hours. Flow training: 2 hours. RL training: 8 hours per seed.

\textbf{Software:} PyTorch 2.0, PyTorch Geometric, SUMO 1.18, Python 3.10.

\textbf{Reproducibility:} Code and preprocessed datasets will be released upon acceptance at [anonymized repository]. The repository includes:
\begin{itemize}
    \item Complete training pipelines for all three components
    \item Calibration diagnostic scripts (PIT, reliability diagrams)
    \item Conformalized p-value computation with BY procedure
    \item Lyapunov verification utilities
    \item SUMO environment configuration files
    \item Pretrained model checkpoints
\end{itemize}

\section{Broader Impact}
\label{app:impact}

\textbf{Positive Impacts:} STREAM-RL could improve urban traffic management, reducing congestion, emissions, and commute times. The formal safety guarantees make deployment in safety-critical systems more responsible.

\textbf{Potential Negative Impacts:} Automated traffic control systems could be used for surveillance. The uncertainty quantification could be misused to justify risky decisions by cherry-picking confidence levels. We advocate for transparent deployment with human oversight and public accountability.

\textbf{Limitations:} Our experiments use simulated environments and historical data. Real-world deployment requires additional validation, hardware integration, and regulatory approval. The BY procedure's conservatism may reduce anomaly detection power in low-dependence scenarios.

\section{Extended Limitations and Future Work}
\label{app:limitations}

\subsection{Detailed Limitations}

\textbf{BY Conservatism.} The Benjamini-Yekutieli procedure remains conservative under strong positive dependence. The correction factor $c_m = \sum_{i=1}^m 1/i \approx \ln(m) + 0.577$ can be substantial for large $m$ (e.g., $c_m \approx 6.4$ for $m=293$ nodes), reducing statistical power. While this ensures FDR control under arbitrary dependence, it may lead to missed detections in scenarios with weak or known dependence structure.

\textbf{Block Bootstrap Validation.} Block bootstrap (Section~\ref{sec:method}) verifies dependence exists but doesn't formally justify BY (which requires no justification, holding under arbitrary dependence). It serves as a sanity check that BH's independence assumptions are violated.

\textbf{Constrained $\gamma$ Stability.} Unconstrained $\gamma$ shows sign flips in 23\% of training runs, occasionally inverting the intended attention behavior (attending more to uncertain neighbors). The Softplus constraint ($\gamma \geq 0$) eliminates this pathology.

\textbf{Covariance Approximation.} The spatial kernel $\rho_{ik} = \exp(-d_{ik}/\ell)$ is a simplification. Diagonal-only covariance reduces safety by 2.1\% (Appendix~\ref{app:spatial}), suggesting the approximation is adequate but learned covariance could improve results.

\textbf{Lyapunov Function Design.} Designing appropriate Lyapunov functions requires domain expertise. Our neural Lyapunov approach learns the function from data, but the constraint structure (queue lengths, waiting times) must be specified by practitioners. Automated Lyapunov function discovery remains an open challenge.

\textbf{Real-Time Deployment.} While our 23ms inference latency is suitable for 15-minute control intervals, real-time deployment at finer granularities (e.g., second-level signal control) requires further optimization. Hardware acceleration (TensorRT, ONNX) and model compression could enable sub-millisecond inference.

\textbf{Simulation-to-Real Gap.} Our experiments use SUMO simulation and historical GPS data. Real-world deployment faces additional challenges: sensor noise, communication delays, actuator constraints, and edge cases not present in training data. Domain randomization and robust training could mitigate some of these issues.

\textbf{Scale Limitations.} The 4$\times$4 SUMO network (16 intersections) is limited compared to real cities. Larger networks introduce: (1) partial observability requiring decentralized control, (2) coordination challenges across distant intersections, and (3) computational scaling concerns for real-time operation.

\subsection{Future Work Directions}

\textbf{Adaptive BY with Dependence Estimation.} Develop procedures that estimate the dependence structure online and adapt the FDR threshold accordingly. This could recover power lost to BY's conservatism while maintaining finite-sample guarantees.

\textbf{End-to-End Differentiable Conformal Calibration.} Current conformal prediction is a post-hoc calibration step. Differentiable conformal training could enable joint optimization of prediction and calibration, potentially improving efficiency.

\textbf{Multi-City Transfer Learning.} Train on multiple cities and transfer to new cities with limited data. Graph structure provides a natural inductive bias; we hypothesize that attention patterns and uncertainty relationships transfer across similar urban topologies.

\textbf{Real-World Pilot Deployment.} Partner with municipal traffic authorities for pilot deployment. This requires: (1) integration with existing infrastructure (SCATS, SCOOT), (2) fail-safe mechanisms with human override, (3) A/B testing protocols, and (4) regulatory compliance.

\textbf{Hierarchical Control.} Extend to hierarchical architectures with: (1) intersection-level reactive control, (2) corridor-level coordination, and (3) city-level strategic planning. Different uncertainty quantification and safety guarantees may be appropriate at each level.

\end{document}